\documentclass[sigconf]{acmart}
\usepackage{subcaption}
\usepackage[ruled,vlined]{algorithm2e}
\usepackage{multirow}
\usepackage{amsmath}
\usepackage[table]{xcolor}

\copyrightyear{2026}
\acmYear{2026}
\setcopyright{cc}
\setcctype{by}
\acmConference[RecSys '26]{20th ACM Conference on Recommender Systems}{September 27-October 02, 2026}{Minneapolis, MN, USA}
\acmBooktitle{20th ACM Conference on Recommender Systems (RecSys '26), September 27-October 02, 2026, Minneapolis, MN, USA}
\acmDOI{10.1145/3773078.3831823}
\acmISBN{979-8-4007-2284-4/2026/09}

\begin{document}

\title{DP-Rec: Towards Dynamic Patching for Efficient Long-Sequence Recommendation}

\author{Dwipam Katariya}
\authornote{Core Contributors}
\orcid{0009-0009-1058-1244}
\affiliation{%
  \institution{Capital One, AI Foundations}
  \city{McLean}
  \state{VA}
  \country{USA}
}

\author{Thomas Caputo}
\authornotemark[1]
\orcid{0009-0000-4247-9740}
\affiliation{%
  \institution{Capital One, AI Foundations}
  \city{McLean}
  \state{VA}
  \country{USA}
}

\author{Akshat Shreemali}
\authornotemark[1]
\orcid{0009-0003-4481-150X}
\affiliation{%
\institution{Capital One, AI Foundations}
  \city{New York}
  \state{NY}
  \country{USA}}

\author{Juan Manuel Origgi}
\orcid{0000-0003-3617-6019}
\affiliation{%
  \institution{Capital One, AI Foundations}
  \city{New York}
  \state{NY}
  \country{USA}
}

\author{Nikita Seleznev}
\orcid{0000-0001-7615-4493}
\affiliation{%
\institution{Capital One, AI Foundations}
  \city{Boston}
  \state{MA}
  \country{USA}}

\author{Pranab Mohanty}
\orcid{0000-0002-2224-8467}
\affiliation{%
\institution{Capital One, AI Foundations}
  \city{Seattle}
  \state{WA}
  \country{USA}}

\author{Kalanand Mishra}
\orcid{0000-0002-1832-1537}
\affiliation{%
\institution{Capital One, AI Foundations}
  \city{San Jose}
  \state{CA}
  \country{USA}}

\author{Nam Nguyen}
\orcid{0009-0008-6981-4463}
\affiliation{%
\institution{Capital One, AI Foundations}
  \city{New York}
  \state{NY}
  \country{USA}}

\author{James Montgomery}
\orcid{0009-0002-4137-8176}
\affiliation{%
\institution{Capital One, AI Foundations}
  \city{McLean}
  \state{VA}
  \country{USA}}
  
\renewcommand{\shortauthors}{Katariya et al.}

\begin{abstract}
Transformers have redefined sequential recommendation by effectively modeling dynamic user behaviors and long-range dependencies. However, they remain inherently inefficient: standard architectures operate at a fixed rate, allocating comparable computation to every item in a user’s history regardless of its information content. This leads to prohibitive computational overhead on long sequences and increased sensitivity to behavioral noise. To address this, practitioners often resort to lossy sequence compression, staged modeling, or truncation. This limits the model’s ability to leverage the full context of long histories during inference. Inspired by the recent success of Byte Latent Transformer, we propose DP-Rec, a dynamic latent patching architecture for recommendation. DP-Rec shifts from item-level modeling to patch-level modeling by segmenting interaction sequences using contrastive entropy surprise to identify informative behavioral boundaries. A lightweight patch encoder compresses these temporally contextualized segments into a reduced set of dynamic latent behavior vectors, which are then processed by a larger latent transformer and decoded for next-item prediction. Extensive experiments show that, under constrained computational budgets, DP-Rec scales effectively to long sequences and achieves a superior efficiency--accuracy trade-off over both non-compressed and fixed-size compression baselines.
\end{abstract}

\begin{CCSXML}
<ccs2012>
   <concept>
       <concept_id>10010147.10010257</concept_id>
       <concept_desc>Computing methodologies~Machine learning</concept_desc>
       <concept_significance>500</concept_significance>
       </concept>
   <concept>
       <concept_id>10002951.10003317.10003347.10003350</concept_id>
       <concept_desc>Information systems~Recommender systems</concept_desc>
       <concept_significance>500</concept_significance>
       </concept>
   <concept>
       <concept_id>10010147.10010178.10010187</concept_id>
       <concept_desc>Computing methodologies~Knowledge representation and reasoning</concept_desc>
       <concept_significance>300</concept_significance>
       </concept>
   <concept>
       <concept_id>10002951.10002952.10002953.10010820.10010518</concept_id>
       <concept_desc>Information systems~Temporal data</concept_desc>
       <concept_significance>500</concept_significance>
       </concept>
 </ccs2012>
\end{CCSXML}

\ccsdesc[500]{Computing methodologies~Machine learning}
\ccsdesc[500]{Information systems~Recommender systems}
\ccsdesc[300]{Computing methodologies~Knowledge representation and reasoning}
\ccsdesc[500]{Information systems~Temporal data}

\keywords{personalization, long user sequence modeling, transformers, recommender systems}

\maketitle

\section{Introduction}

\begin{figure*}[t]
    \centering
    \includegraphics[width=0.95\textwidth]{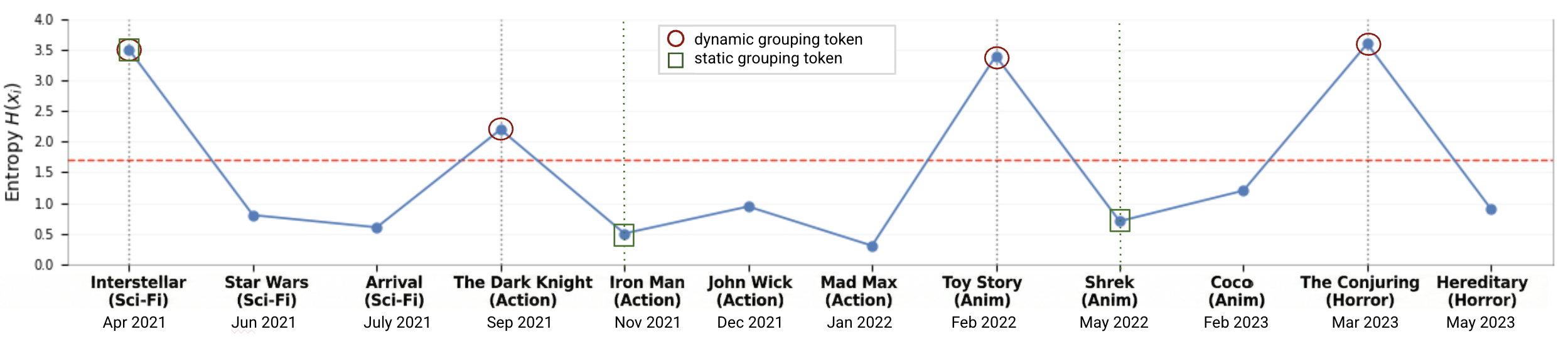}
    \Description{A line graph of entropy over a user interaction stream. A dashed line marks threshold tau. Red circles mark high-entropy peaks for Dynamic Patching; a green square marks a Fixed-size Patching interval}
    \caption{Dynamic vs Static Patching. Unlike static patching (green square), DP-Rec adaptively partitions interaction streams at contrastive entropy peaks (e.g., Interstellar, The Conjuring) (red circle). These peaks signify shifts in latent user intent. Optimized threshold $\tau$ (red dashed line) results in variable-length patching, focusing computational resources on informative behavioral boundaries rather than redundant items to maximize recommendation efficiency.} 
    \label{fig:entropies_graph}
\end{figure*}

Modern recommendation systems (RecSys) are undergoing a paradigm shift, moving from traditional feature-based architectures toward generative sequential models capable of processing long user behavior histories. These extended contexts are critical for capturing multifaceted user preferences such as long-term seasonality and recurring interests, which are often discarded by shorter lookback windows or static user embeddings~\cite{hstu, longer, survey2023, rethinkseqrec, scalingpre, personexprt}. Further, scaling sequence length in tandem with other parameters is essential to avoid performance plateaus~\cite{scaling}. Beyond raw performance gains, larger models are inherently more "sample-efficient," as they extract a higher volume of useful information from individual training examples than smaller models~\cite{scalinglaw, scalingseq, scaling}. However, the scalability of Transformer-based architectures is fundamentally constrained by the quadratic computational and memory complexity, denoted as $O(N^2 d)$, of the self-attention mechanism relative to sequence length $N$ and embedding dimension $d$~\cite{attn, personexprt}. In industrial settings, where inference must be executed for every impression within a user session, this burden is often exacerbated by the inclusion of rich item attributes (e.g., categories, brands, and textual descriptions)~\cite{hstu, timesync, transactv2}. To mitigate this, practitioners often resort to lossy sequence compression such as embeddings~\cite{fintrec, pinnerformer} or staged modeling. Representative industrial solutions include TWIN V2~\cite{twinv2}, which employs hierarchical clustering to compress life-cycle behaviors. Recent works such as LONGER~\cite{longer} have introduced token-merging schemes to compress sequence lengths. However, these methods often rely on fixed heuristics or static partitioning schemes that fail to account for the varying information density of user behavior. We argue that such compression is inherently dependent on information density; naively merging tokens based on a global length or fixed rate is sub-optimal for the varied rhythms of human behavior, requiring dynamic compression solutions as shown in Figure~\ref{fig:entropies_graph}. Dynamic compression strategies have recently demonstrated significant potential across diverse sequential domains~\cite{blt, patchtst, srsnet, reinpatch}. Specifically, the Byte Latent Transformer (BLT)~\cite{blt} uses a Perceiver-style architecture~\cite{perciever} for dynamic sequence patching. While BLT establishes the advantage of adaptive, entropy-based patching over static (fixed-vocabulary) tokenization, it derives boundaries from a full next-token softmax that is inexpensive over its 256-symbol byte vocabulary but becomes prohibitive at the million-item vocabularies of production recommender systems. Similarly, PatchTST~\cite{patchtst} utilizes patching mechanisms to capture local semantic patterns in multivariate time-series data. Despite its success in other fields, dynamic patching has yet to be investigated within the context of RecSys. We argue that a tailored latent architecture is essential to capture the idiosyncratic behavioral dynamics of recommendation data, and introduce \textbf{DP-Rec}, which adapts dynamic latent patching for sequential recommendation. Our contributions are as follows:

\begin{itemize}
    \item To our knowledge, we are the first to establish the viability of dynamic latent patching for sequential recommendation. Across \textit{KuaiRand}, \textit{ML-1M}, and \textit{ML-10M-L}, \textbf{DP-Rec} achieves Pareto-superior trade-offs, delivering either higher inference efficiency under matched accuracy, or higher accuracy within a fixed computational budget.

    \item \textbf{Dynamic Contrastive Patching}: We introduce \textbf{Contrastive Entropy Surprise}, a lightweight surrogate for Shannon entropy that resolves the computational intractability of full-vocabulary softmax in recommendations, while providing the dynamic boundary adaptability missing in static models like LONGER~\cite{longer}. This formulation demonstrates a propensity to align boundaries with shifts in interaction patterns rather than uniform positions, effectively grouping thematically coherent interaction "bursts".

    \item \textbf{Temporal-Aware Behavioral Context}: We incorporate inter-event temporal signals as first-class features within the patching mechanism. By leveraging interaction rhythms and temporal gaps alongside item-ID transitions, \textbf{DP-Rec} establishes temporal dynamics as an important determinant of behavioral context to detect informative segment boundaries.
\end{itemize}

\begin{table}[t]
\centering
\caption{Notations}
\label{tab:notation}
\small
\setlength{\tabcolsep}{4pt}
\begin{tabular}{ll}
\toprule
\textbf{Symbol} & \textbf{Meaning} \\
\midrule
$\mathcal{U}$ & Set of users \\
$\mathcal{V}$ & Set of items \\
$B$ & Mini-batch size \\
$N$ & Raw sequence length ($N_{\mathrm{max}}, N_{\mathrm{raw}}$) \\
$\mathbf{s}_u$ & Interaction sequence of user $u$, $(v_1, \dots, v_N)$ \\
$v_t$ & Item ID at position $t$ \\
$T_t$ & Absolute timestamp at position $t$ \\
$\mathbf{e}_v$ & Latent embedding vector for item $v$ ($\in \mathbb{R}^d$) \\
$\phi_{u,t}$ & Contrastive entropy surprise score at step $t$ \\
$\tau$ & Global boundary threshold \\
$\boldsymbol{\Gamma}$ & Binary boundary indicator matrix ($\in \{0,1\}^{B \times N}$) \\
$\boldsymbol{\mu}$ & Sequence padding mask matrix ($\in \{0,1\}^{B \times N}$) \\
$\mathcal{P}$ & Set of behavioral patches $\{P_1,\dots,P_M\}$ \\
$M$ & Realized patch count for a sequence \\
$\hat{M}$ & Target average patch budget \\
$P_m$ & Behavioral patch $m = [l_m, r_m]$ \\
$m(t)$ & Patch index containing event $t$ \\
$n_p$ & Average patch size \\
$\mathbf{b}_t$ & Boundary detector hidden state at position $t$ ($\in \mathbb{R}^{d_{\mathrm{bbd}}}$) \\
$\mathbf{h}_t$ & Event state from temporal local encoder ($\in \mathbb{R}^{d_{\mathrm{loc}}}$) \\
$\mathbf{z}_m$ & Input latent representation for patch $m$ ($\in \mathbb{R}^d$) \\
$\mathbf{g}_m$ & Inter-patch representation from Latent Transformer ($\in \mathbb{R}^d$) \\
$\mathbf{o}_t$ & Decoded output event state at position $t$ ($\in \mathbb{R}^d$) \\
$d$ & Latent embedding dimension \\
$d_{\mathrm{bbd}}$ & Boundary detector hidden dimension \\
$d_{\mathrm{ff}}$ & Feed-forward expansion dimension \\
$d_{\mathrm{loc}}$ & Local module hidden dimension \\
$w$ & Local attention sliding window size \\
$w_e$ & Boundary detector sliding window size \\
$L_{\mathrm{loc}}$ & Depth of local modules ($L_{\mathrm{enc}}=L_{\mathrm{dec}}=L_{\mathrm{loc}}$) \\
$L_{\mathrm{lat}}$ & Layer count for Temporal Latent Transformer \\
$f_{\mathrm{bbd}}$ & Behavioral boundary detector function \\
$f_{\mathrm{enc}}$ & Temporal local encoder function \\
$f_{\mathrm{latent}}$ & Temporal latent transformer function \\
$f_{\mathrm{dec}}$ & Local decoder function \\
\bottomrule
\end{tabular}
\end{table}

\section{DP-Rec}
\label{sec:DP-Rec}
\begin{figure*}[t]
    \centering
        \includegraphics[width=0.95\textwidth]{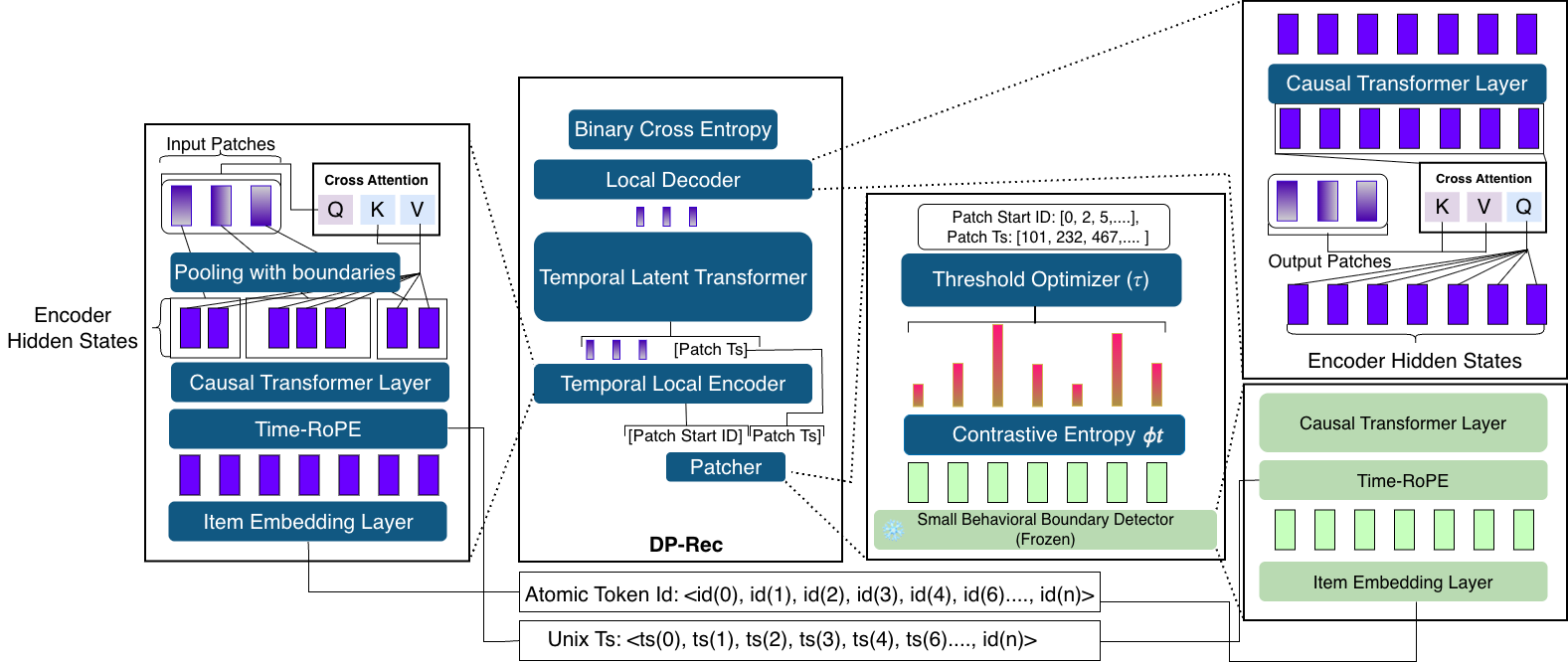}
    \caption{Overall architecture of DP-Rec. A frozen detector computes contrastive-entropy scores, enabling an adaptive threshold optimizer to partition sequences into $M$ variable-length behavioral patches. These are contextualized by a temporal local encoder and modeled via a temporal latent transformer before a local decoder projects representations back to the item space for next-item prediction.}
    \Description{Fully described in the caption and main text.}
    \label{fig:DP-Rec_architecture}
\end{figure*}
DP-Rec follows the high-level latent patching paradigm of BLT~\cite{blt} and Perceiver~\cite{perciever}, but adapts the architecture for sequential recommendation. Given a raw interaction sequence of item IDs and Unix timestamps, the behavioral boundary detector identifies patch boundaries used by the temporal local encoder to compress low-surprisal, variable-length segments into latent patch representations. The temporal latent transformer processes these patches, which a local decoder then projects back to item-level states for next-item prediction (Figure~\ref{fig:DP-Rec_architecture}). Formally, DP-Rec maps a raw interaction sequence to next-item scores through four stages:
  \begin{align*}
  \text{Boundaries:} \;\; & \mathbf{s}_u \xrightarrow{\,f_{\mathrm{bbd}},\ \text{Alg.~\ref{alg:threshold_optimization}}\,} (\boldsymbol{\Gamma}, \mathcal{P}) \\                                                     
  \text{Encode:} \;\; & \mathcal{P} \xrightarrow{\,f_{\mathrm{enc}}\,} \{\mathbf{z}_m\}_{m=1}^{M} \\
  \text{Latent:} \;\; & \{\mathbf{z}_m\} \xrightarrow{\,f_{\mathrm{latent}}\,} \{\mathbf{g}_m\}_{m=1}^{M} \\
  \text{Decode:} \;\; & \{\mathbf{g}_m\} \xrightarrow{\,f_{\mathrm{dec}}\,} \{\mathbf{o}_t\} \rightarrow \hat{y}_{u,t+1}
  \end{align*}
  where the boundary detector ($f_{\mathrm{bbd}}$) scores contrastive surprise and Algorithm~\ref{alg:threshold_optimization} turns it into boundaries $\boldsymbol{\Gamma}$ and patches $\mathcal{P}$; the temporal local encoder ($f_{\mathrm{enc}}$) compresses each patch into a latent $\mathbf{z}_m$; the temporal latent Transformer ($f_{\mathrm{latent}}$) models inter-patch dependencies as $\mathbf{g}_m$; and the local decoder ($f_{\mathrm{dec}}$) projects back to event states $\mathbf{o}_t$ for scoring.

\subsection{Temporal Rotary Positional Embeddings (Time-RoPE)}
\label{sec:timerope}

To encode interaction rhythms, we adapt Rotary Positional Embeddings~\cite{rope} to absolute timestamps $T_t$. For a latent vector $\mathbf{x} \in \mathbb{R}^d$, the transformation $\mathcal{R}(\mathbf{x}, T_t)$ rotates $d/2$ dimension pairs in the query-key space:
\begin{equation}
  \mathcal{R}(\mathbf{x}, T_t) = \mathbf{x} \odot \cos(\boldsymbol{\Theta} T_t) + \tilde{\mathbf{x}} \odot \sin(\boldsymbol{\Theta} T_t),
\end{equation}
where $\tilde{\mathbf{x}}$ denotes the vector obtained by rotating each $(x_{2i-1}, x_{2i})$ pair by $90^\circ$, i.e., $\tilde{x}_{2i-1} = -x_{2i},\ \tilde{x}_{2i} = x_{2i-1}$, and $\boldsymbol{\Theta} = \{\theta_i = 10000^{-2i/d}\}_{i=1}^{d/2}$ are the inverse frequencies. This makes the attention score between two interactions at times $T_t$ and $T_{t'}$ a function of their time lag $(T_t - T_{t'})$, giving the boundary detector, local and latent Transformers a continuous, lag-aware sense of behavioral timing.

\subsection{Behavioral Boundary Detection}
\label{sec:patching}

\begin{algorithm}[t]
\caption{Adaptive Global Threshold Optimization}
\label{alg:threshold_optimization}
  \KwIn{Surprise scores $\phi \in \mathbb{R}^{B\times N}$, padding mask $\boldsymbol{\mu}\in\{0,1\}^{B\times N}$, target average patch count $\hat{M}$}
  \KwOut{Boundary indicators $\boldsymbol{\Gamma}\in\{0,1\}^{B\times N}$, global threshold $\tau$}

$\Phi \leftarrow \{\phi_{u,t} \mid \boldsymbol{\mu}_{u,t}=1\}$\;
$N_\Phi \leftarrow |\Phi|$\;
$\bar{N} \leftarrow N_\Phi/B$\;
$M_{\mathrm{eff}} \leftarrow \min(\hat{M}, \bar{N})$\;
$\kappa \leftarrow \mathrm{round}((M_{\mathrm{eff}}-1) \cdot B)$\;
$\kappa \leftarrow \max(0, \min(\kappa, N_\Phi))$\;

\If{$\kappa=0$}{
    $\tau \leftarrow +\infty$\;
    $\boldsymbol{\Gamma}_{u,t} \leftarrow 0 \quad \forall (u,t)$\;
}
\Else{
    Sort $\Phi$ in descending order to obtain $\Phi^\downarrow$\;
    $\tau \leftarrow \Phi^\downarrow_\kappa$\;
    $\boldsymbol{\Gamma}_{u,t} \leftarrow \mathbb{I}[\phi_{u,t} \ge \tau] \cdot \boldsymbol{\mu}_{u,t} \quad \forall (u,t)$\;
}

\Return{$(\boldsymbol{\Gamma},\tau)$}\;
\end{algorithm}

To capture behavioral transitions (such as shifts in user intent or interaction rhythm), we employ a lightweight boundary detector that identifies patch boundaries prior to the primary recommendation model. While Byte-Latent Transformers~\cite{blt} derive boundaries from full-vocabulary next-token entropy, this approach is computationally prohibitive in sequential recommendation due to the massive item vocabulary $|\mathcal{V}|$. Consequently, we utilize a tractable surrogate based on \textbf{contrastive next-item surprise}.

The detector encodes the interaction sequence using a lightweight causal Transformer architecture featuring sliding-window attention and Time-RoPE applied to centered timestamps $\Delta T_t = T_t - T_1$, where $T_1$ is the timestamp of the first event in the sequence. Let $\mathbf{b}_t \in \mathbb{R}^{d_{\mathrm{bbd}}}$ denote the detector's hidden state at step $t$. The detector is a self-contained module, pre-trained then frozen, with its own parameters and item embeddings at a small hidden size $d_{\mathrm{bbd}}$ (set to $8$ in our experiments); within this subsection $\mathbf{b}_t$ and $\mathbf{e}_v$ denote these detector-internal representations, distinct from the recommendation embeddings ($\in \mathbb{R}^d$) used for scoring in Eq.~\ref{eq:bpr_loss}. To estimate surprise, we define a binary contrastive objective utilizing the ground-truth next item $v_{t+1}$ and a sampled negative item $v_t^-$. The contrastive surprise score $\phi_{u,t}$ is calculated as:
\begin{equation}
\phi_{u,t} = -p^+_{u,t}\log p^+_{u,t} - p^-_{u,t}\log p^-_{u,t},
\label{eq:surprise}
\end{equation}
where
\begin{equation}
p^+_{u,t} = \frac{\exp(\mathbf{b}_t^\top \mathbf{e}_{v_{t+1}})}{\exp(\mathbf{b}_t^\top \mathbf{e}_{v_{t+1}}) + \exp(\mathbf{b}_t^\top \mathbf{e}_{v_t^-})},
\qquad
p^-_{u,t} = 1 - p^+_{u,t}.
\label{eq:bpr_probs}
\end{equation}
A high $\phi_{u,t}$ indicates that the model assigns roughly equal probability to the positive and negative item, signaling a behavioral boundary.

Boundary selection is formulated as an \textbf{Adaptive Threshold Optimization} (Algorithm~\ref{alg:threshold_optimization}), where a global threshold $\tau$ is derived from the empirical surprise distribution to yield binary boundary indicators $\boldsymbol{\Gamma}$.

\paragraph{\textbf{Contextual Calibration}} 
\begin{figure}[t]
     \centering
     \includegraphics[width=0.48\textwidth]{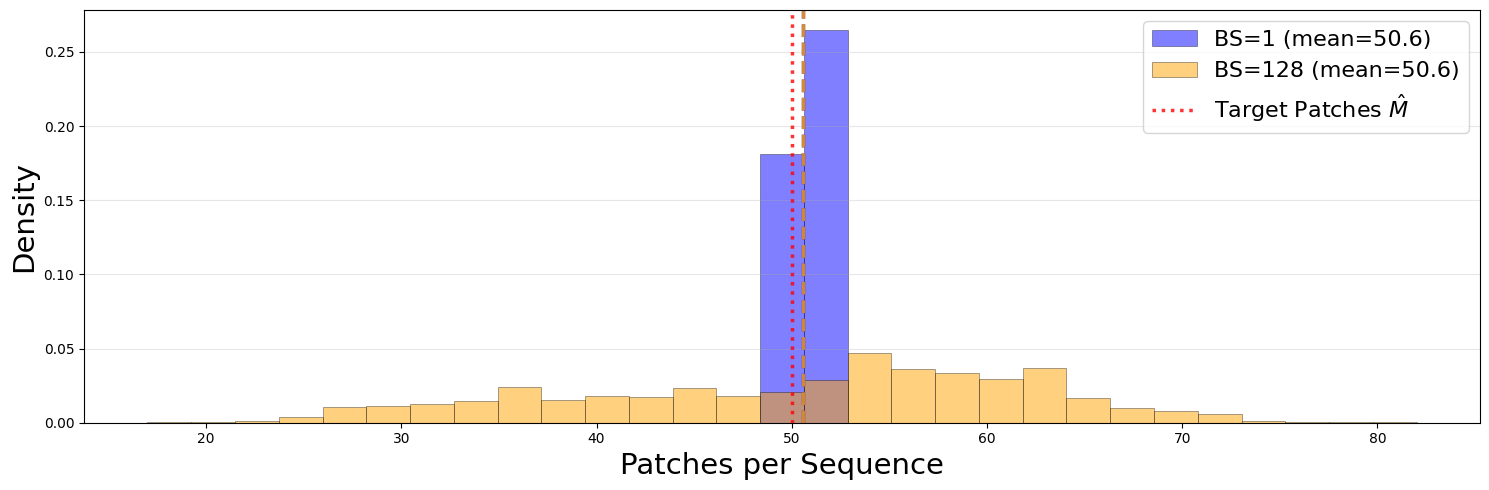}
     \caption{Impact of batch size on patch budget adherence (ML-1M, $\hat{M}=50$) for $N \ge 100$. While both configurations maintain the target mean, the Batch Size $BS=128$ distribution shows the effect of cross-sequence budget reallocation within mini-batches, contrasted with the high-precision adherence of the $BS=1$ case}
     \label{fig:ml-1m_patch_count}
     \Description{Histogram of patches per sequence. Blue bars (batch size 1) show low variance; yellow bars (batch size 128) show higher variance. A dashed vertical line marks the target patch count controlled by tau.}
\end{figure}
The optimization context varies by phase: we utilize mini-batch distributions ($B > 1$) during training and independent sequence distributions ($B=1$) during inference. We establish a combinatorial lemma (see supplementary material for details), which characterizes this behavior; notably, batch-level calibration allows high-entropy sequences to ``borrow'' capacity from lower-entropy sequences within the same mini-batch. As illustrated in Figure~\ref{fig:ml-1m_patch_count}, increasing the batch size introduces higher variance in per-sequence patch counts while maintaining the target mean. To ensure deterministic computational control and consistent evaluation, we employ per-sequence thresholding for all validation and test phases. We further note that while ties in surprise scores $\phi_{u,t}$ may lead to minor deviations from the exact cardinality $\hat{M}$, the procedure serves as a robust practical mechanism for behavioral compression across varying sequence lengths.

\subsection{Temporal Local Encoder}
\label{sec:local_encoder}

The local patch encoder transforms event-level sequences into compressed latent representations. We first compute event states $\mathbf{H} = [\mathbf{h}_1,\dots,\mathbf{h}_N]$ via a lightweight causal Transformer $f_{\mathrm{enc}}$ utilizing sliding-window attention and Time-RoPE applied to centered timestamps $\Delta T_t = T_t - T_1$. 

For each behavioral patch $P_m = [l_m, r_m]$, an initial latent patch query $\mathbf{z}_m^{(0)}$ is constructed via global max pooling over event states:
\[
\mathbf{z}_m^{(0)} = W_{\mathrm{init}} \left( \max_{t \in P_m} \mathbf{h}_t \right).
\]
These queries are iteratively refined through $L_{\mathrm{enc}}$ patch-restricted cross-attention layers, where latent representations act as queries and event states serve as keys and values:
\[
\mathbf{z}_m^{(\ell)} = \mathbf{z}_m^{(\ell-1)} + \mathrm{CrossAttn}\left( W_Q \mathbf{z}_m^{(\ell-1)}, \{W_K \mathbf{h}_t, W_V \mathbf{h}_t\}_{t \in P_m} \right).
\]
After $L_{\mathrm{enc}}$ layers, the final representation $\mathbf{z}_m = \mathbf{z}_m^{(L_{\mathrm{enc}})}$ serves as the input latent representation for patch $P_m$ to the Temporal Latent Transformer.

\subsection{Temporal Latent Transformer}
\label{sec:global_transformer}

Given the latent patch representations $\mathbf{z}_1,\dots,\mathbf{z}_M$, the temporal latent Transformer models dependencies across behavioral patches autoregressively. We use a causal patch-level Transformer, where each patch attends only to previous patches and the current patch.

Unlike the local encoder, where time is defined at the event level, the temporal latent Transformer uses a single temporal anchor per patch. For a patch $P_m=[l_m,r_m]$, we preserve its patch-end timestamp $T_{r_m}$ and use the last centered event time in that patch, $\Delta T_{r_m}$, in Time-RoPE. Thus, each latent patch is temporally anchored by the time of its final event.

The temporal latent Transformer computes:
\[
\mathbf{g}_1,\dots,\mathbf{g}_M = f_{\mathrm{latent}}\!\left(\mathbf{z}_1,\dots,\mathbf{z}_M; \, \Delta T_{r_1},\dots,\Delta T_{r_M} \right),
\]
where $\mathbf{g}_m \in \mathbb{R}^d$ is the inter-patch representation, and Time-RoPE is applied in patch-level self-attention using these patch-level temporal anchors.

Because the temporal latent Transformer operates on the compressed patch sequence rather than the full interaction history, its cost scales with the number of patches $M$ rather than the raw sequence length $N$.

\subsection{Local Decoder}
\label{sec:local_decoder}
The local decoder projects patch representations back to event-level states for scoring. It reverses the encoder's cross-attention roles~\cite{blt}, using event states as queries and patch representations $\mathbf{g}_m$ as keys and values. At layer $\ell \in \{1, \dots, L_{\mathrm{dec}}\}$, event states $\mathbf{o}_t^{(\ell)}$ are updated via:
\[
\mathbf{o}_t^{(\ell)} = \mathbf{o}_t^{(\ell-1)} + \mathrm{CrossAttn}\left(W_Q \mathbf{o}_t^{(\ell-1)}, W_K \mathbf{g}_{m(t)}, W_V \mathbf{g}_{m(t)}\right)
\]
where $m(t)$ is the patch index for event $t$. This injects global context into the event stream before a final causal Transformer yields representations $\mathbf{o}_t$ for prediction.

\paragraph{Next-item scoring.}
Hidden states $\mathbf{o}_t$ are matched against the target item embedding $\mathbf{e}_{v_{t+1}}$ and a sampled negative $\mathbf{e}_{v_t^-}$ via independent sigmoid scoring. We minimize the loss:
\begin{equation}
\mathcal{L} = -\sum_{(u,t):\, v_{t+1} \neq 0}
\left[
\log \sigma\!\left(\mathbf{o}_t^\top \mathbf{e}_{v_{t+1}}\right)
+
\log \sigma\!\left(-\mathbf{o}_t^\top \mathbf{e}_{v_t^-}\right)
\right],
\label{eq:bpr_loss}
\end{equation}
where $\sigma(\cdot)$ is the sigmoid function and the sum is over all
non-padding positions.

\section{Experiments}
\subsection{Datasets}

\begin{table}[t]
  \centering
  \caption{Dataset Summary}
  \label{tab:datasets}
  \small
  \begin{tabular}{lcccc}
    \toprule
    \textbf{Dataset} & $\mathcal{U}$ & $\mathcal{V}$ & \textbf{\# Actions} & \textbf{Avg. Actions / User} \\
    \midrule
    KuaiRand~\cite{kuairand} & 27,285 & 13,222,181 & 122,052,626 & 4,473 \\
    ML-1M~\cite{ml1m}       & 6,040  & 3,416      & 999,611     & 165    \\
    ML-10M-L~\cite{ml1m}   & 3,683  & 10,632     & 3,114,710   & 844    \\
    \bottomrule
  \end{tabular}
\end{table}

We evaluate \textbf{DP-Rec} on three datasets: \textit{ML-1M}~\cite{ml1m}, \textit{ML-10M-L}~\cite{ml1m}, and \textit{KuaiRand}~\cite{kuairand}. Due to the scarcity of public long-horizon benchmarks, we curated \textit{ML-10M-Long} (\textbf{ML-10M-L}), a long-sequence variant retaining only users with $\ge 500$ interactions from \textit{ML-10M}. Following standard protocols~\cite{sasrec, recmamba}, we apply 5-core filtering and chronological sorting. For \textit{KuaiRand}, we retain only clicked interactions following standard implicit feedback practice, yielding 4,473 average actions per user. We adopt a leave-one-out split, reserving the final and penultimate interactions for testing and validation, respectively. Preprocessed statistics are detailed in Table~\ref{tab:datasets}.

\subsection{Baselines}
\begin{itemize}
    \item \textbf{GRU4Rec}~\cite{gru4rec}: A standard RNN-based baseline utilizing Gated Recurrent Units for sequential modeling.
    
    \item \textbf{SASRec}~\cite{sasrec}: A representative and widely adopted self-attention architecture employing a causal (unidirectional) Transformer.
    
    \item \textbf{HSTU}~\cite{hstu}: A point-wise Transducer-based model incorporating relative temporal information. We evaluate it in a unimodal, causal configuration to maintain architectural parity.
    
    \item \textbf{LONGER}~\cite{longer}: A hierarchical baseline that compresses sequences into fixed-size blocks via an InnerTransformer, specifically designed for long-range dependency modeling.
\end{itemize}

\subsection{Evaluation Metrics}
We report widely used \texttt{NDCG@K=[10, 20]} and \texttt{HR@K=[5, 10, 20]}. Across all datasets, and for both validation and test, we adopt sampled evaluation: for each user we score the held-out target item against $1{,}000$ uniformly sampled negatives and rank it within the resulting $1001$-item candidate set. While the model is trained over all timesteps (Eq.~\ref{eq:bpr_loss}), evaluation follows the leave-one-out  protocol and scores a single held-out target per user. For each test user $u$, $\operatorname{rank}(y_u)$ denotes the rank of the held-out target $y_u$ among these $1001$ candidates under the model's scores. We compute
  \[
  \begin{aligned}
  \operatorname{NDCG}@K &= \frac{1}{|\mathcal{U}|}\sum_{u\in\mathcal{U}} \frac{\mathbb{I}[\operatorname{rank}(y_u)\le K]}{\log_2(\operatorname{rank}(y_u)+1)}, \\
  \operatorname{HR}@K &= \frac{1}{|\mathcal{U}|}\sum_{u\in\mathcal{U}} \mathbb{I}[\operatorname{rank}(y_u)\le K].
  \end{aligned}
  \]

\subsection{FLOPs Accounting}

Inference efficiency is measured by FLOPs per forward pass, following~\cite{hoffmann2022training} but excluding non-arithmetic costs (normalization, residuals, and embedding lookup). For a Transformer with sequence length $N$, context $c$, hidden dimension $d$, expansion $d_{\mathrm{ff}}$, and $L_{\mathrm{block}}$ layers, the base complexity is defined as:
\begin{equation}
\mathrm{FLOPs}(N,c) = N L_{\mathrm{block}} \left( 4d^2 d_{\mathrm{ff}} + 8d^2 + 2d(c+1) \right).
\label{eq:base_flops}
\end{equation}
The terms represent the feed-forward network, projections ($Q/K/V/O$), and attention aggregation, respectively. Per-user FLOPs are calculated using actual sequence lengths. We set $d_{\mathrm{ff}}=1$ across all models.

For \textbf{SASRec}, the quadratic causal context $c=(N+1)/2$ yields:
\begin{equation}
\mathrm{FLOPs}_{\mathrm{SAS}} = N L_{\mathrm{block}}(4d^2 d_{\mathrm{ff}} + 8d^2 + d(N+1)).
\end{equation}

\textbf{HSTU} utilizes the same backbone with a $1.1\times$ overhead factor to account for gating operations. \textbf{LONGER} with compression ratio $r$ and $G = \lceil N/r \rceil$ groups is modeled as the sum of inner-group encoding, cross-attention, and global self-attention:
\begin{equation}
\mathrm{FLOPs}_{\mathrm{LNG}} = G \cdot \mathrm{FLOPs}(r,r) + 2NGd + \mathrm{FLOPs}(G,G).
\end{equation}

\paragraph{\textbf{DP-Rec Complexity:}}
With $M$ latent patches, average patch size $n_p$, boundary window $w_e$, and local window $w$, we decompose \textbf{DP-Rec} cost into boundary detection ($f_{\mathrm{bbd}}$), temporal latent modeling ($f_{\mathrm{latent}}$), local encoding/decoding ($f_{\mathrm{loc}}$), and bridge cross-attention ($f_{\mathrm{brg}}$):
\begin{equation}
\mathrm{FLOPs}_{\mathrm{DP}} = N \left( f_{\mathrm{bbd}}(w_e) + \frac{f_{\mathrm{latent}}(M)}{n_p} + f_{\mathrm{loc}}(w) + f_{\mathrm{brg}} \right).
\label{eq:DP-Rec_flops}
\end{equation}
Each component is instantiated via Eq.~\ref{eq:base_flops} using its respective architecture. Notably, the boundary detector cost is included in the total inference budget despite being frozen during primary training.

\subsection{Implementation Details}
\label{sec:implementation}

\paragraph{\textbf{Training Configuration}}
All models were implemented in PyTorch and trained on an NVIDIA A100 GPU (40GB). Following established benchmarks~\cite{sasrec, gru4rec}, we trained ML-1M and ML-10M-L for 500 epochs, while KuaiRand was trained for 200 epochs. We use Adam optimizer with a learning rate of $10^{-3}$ and a batch size of 128. To control for confounding factors during hyperparameter optimization, we set dropout to $0.0$ and disable $L_2$ regularization. Best-performing checkpoints are selected based on the highest validation NDCG@10, evaluated every 10 epochs. We do not employ early stopping.

\paragraph{\textbf{Evaluation Protocols}}
We adopt two complementary evaluation regimes:
\begin{itemize}
    \item \textbf{Headline Comparison:} We report a single FLOPs-budgeted operating point per dataset, selected by the best \emph{validation} NDCG@10 under the corresponding FLOPs budget (see
  Table~\ref{tab:headline_results}).
    \item \textbf{Pareto Frontier:} We characterize the efficiency--quality trade-off by sweeping architectures for models whose inference cost can be varied naturally through sequence length or latent compression, namely SASRec~\cite{sasrec}, HSTU~\cite{hstu}, LONGER~\cite{longer}, and \textbf{DP-Rec}---we perform a wide architecture and compression sweep to characterize the efficiency--quality frontier (see Figure~\ref{fig:pareto}).
\end{itemize}

\paragraph{\textbf{Model Specifications}}

We adopt the training pipeline from~\cite{sasrectorch} and model components from RecTools~\cite{rectools} for GRU4Rec and HSTU. LONGER is a custom implementation due to unavailability of open-source code. For \textbf{DP-Rec}, we implement the architecture as described in Section~\ref{sec:DP-Rec}.
\begin{itemize}
    \item \textbf{Fixed-Horizon Models:} SASRec and HSTU employ 3 blocks with hidden dimensions $d \in \{32, 64\}$. The maximum input length $N_{\max}$ is selected from $\{25, 50, 100, 200\}$ for ML-1M and $\{25, 50, 100, 250, 500\}$ for KuaiRand and ML-10M-L. HSTU specifically enables relative-time and relative-position encodings to give it temporal information comparable to DP-Rec. GRU4Rec utilizes $\{1, 3\}$ stacks with $d=64$ and $N_{\max} = 50$ for ML-1M and $N_{\max} = 100$ for KuaiRand and ML-10M-L.
    \item \textbf{Compression-Based Models:}
For LONGER and \textbf{DP-Rec}, the input horizon $N_{\text{raw}}$ and patch budget $M$ are swept across dataset-specific ranges: $\{100, 200\}$ and $\{2, \dots, 100\}$ for ML-1M, and $\{250, 500\}$ and $\{2, \dots, 250\}$ for KuaiRand and ML-10M-L. LONGER utilizes a 1-block inner and 3-block outer Transformer. \textbf{DP-Rec} employs a 1-block local encoder and a 3-block latent Transformer ($d \in \{16, 32, 64\}$); to maintain parameter parity with SASRec, FFN expansion layer is omitted. We evaluate both max-pooling and cross-attention ($k=8$) for patch aggregation. The boundary detector is implemented as a 1-block Transformer with a minimal 8 hidden dimension across all datasets. Both boundary detector and temporal local encoder are implemented with $w_e = w = 32$ (see supplementary material for the full configuration details).
\end{itemize}

Unless otherwise stated, all models use ReLU activation, and all reported results correspond to the best validation-selected configuration for each model and dataset. Total FLOPs are computed post-hoc from the realized inference configurations on the test set to ensure a precise efficiency analysis.

\subsection{Results}
\begin{table*}[tp]
\centering
\caption{Test results under a fixed FLOPs budget matched to best-validation SASRec, reported as mean $\pm$ 95\% CI across 10 seeds. Bold and underline denote the best overall and baseline performers, and \textit{Rel Impr.} measures relative gain over the top baseline.}
\label{tab:headline_results}
\begin{tabular}{llcccccc}
\toprule
 & & \multicolumn{3}{c}{\textit{Fixed-Horizon Baselines}} & \multicolumn{2}{c}{\textit{Compression-Based Models}} & \\
\textbf{Dataset} & \textbf{Metric} & GRU4Rec~\cite{gru4rec} & SASRec~\cite{sasrec} & HSTU~\cite{hstu} & LONGER~\cite{longer} & \textbf{DP-Rec} & \textbf{Rel Impr.} \\
\midrule
\multicolumn{8}{l}{\textit{\textbf{ML-1M} ($N_{\text{max}}=50, N_{\text{raw}}=200$)}} \\
 & NDCG@10 & $0.1961 \pm 0.0023$ & \underline{$0.2176 \pm 0.0031$} & $0.2078 \pm 0.0051$ & $0.2022 \pm 0.0042$ & $\mathbf{0.2523 \pm 0.0017}$ & +16.0\% \\
 & NDCG@20 & $0.2308 \pm 0.0026$ & \underline{$0.2525 \pm 0.0029$} & $0.2410 \pm 0.0049$ & $0.2355 \pm 0.0035$ & $\mathbf{0.2883 \pm 0.0013}$ & +14.2\% \\
 & HR@5    & $0.2329 \pm 0.0057$ & \underline{$0.2628 \pm 0.0045$} & $0.2535 \pm 0.0068$ & $0.2445 \pm 0.0054$ & $\mathbf{0.3092 \pm 0.0024}$ & +17.7\% \\
 & HR@10   & $0.3545 \pm 0.0025$ & \underline{$0.3846 \pm 0.0055$} & $0.3696 \pm 0.0080$ & $0.3637 \pm 0.0039$ & $\mathbf{0.4430 \pm 0.0044}$ & +15.2\% \\
 & HR@20   & $0.4914 \pm 0.0050$ & \underline{$0.5227 \pm 0.0055$} & $0.5010 \pm 0.0086$ & $0.4956 \pm 0.0047$ & $\mathbf{0.5855 \pm 0.0027}$ & +12.0\% \\
\midrule
\multicolumn{8}{l}{\textit{\textbf{ML-10M-L} ($N_{\text{max}}=100, N_{\text{raw}}=500$)}} \\
 & NDCG@10 & $0.1566 \pm 0.0025$ & $0.1474 \pm 0.0022$ & $0.1389 \pm 0.0051$ & \underline{$0.1661 \pm 0.0058$} & $\mathbf{0.2011 \pm 0.0024}$ & +21.1\% \\
 & NDCG@20 & $0.1863 \pm 0.0020$ & $0.1766 \pm 0.0017$ & $0.1673 \pm 0.0060$ & \underline{$0.1977 \pm 0.0056$} & $\mathbf{0.2322 \pm 0.0032}$ & +17.5\% \\
 & HR@5    & $0.1890 \pm 0.0036$ & $0.1766 \pm 0.0027$ & $0.1676 \pm 0.0071$ & \underline{$0.1991 \pm 0.0080$} & $\mathbf{0.2393 \pm 0.0028}$ & +20.2\% \\
 & HR@10   & $0.2788 \pm 0.0043$ & $0.2681 \pm 0.0030$ & $0.2534 \pm 0.0070$ & \underline{$0.2984 \pm 0.0089$} & $\mathbf{0.3547 \pm 0.0043}$ & +18.9\% \\
 & HR@20   & $0.3967 \pm 0.0049$ & $0.3838 \pm 0.0049$ & $0.3658 \pm 0.0119$ & \underline{$0.4238 \pm 0.0090$} & $\mathbf{0.4894 \pm 0.0073}$ & +15.5\% \\
\midrule
\multicolumn{8}{l}{\textit{\textbf{KuaiRand} ($N_{\text{max}}=100, N_{\text{raw}}=500$)}} \\
 & NDCG@10 & $0.3842 \pm 0.0060$ & $0.3795 \pm 0.0072$ & $0.3825 \pm 0.0100$ & \underline{$0.3858 \pm 0.0093$} & $\mathbf{0.4284 \pm 0.0131}$ & +11.0\% \\
 & NDCG@20 & $0.4052 \pm 0.0058$ & $0.4012 \pm 0.0067$ & $0.4022 \pm 0.0096$ & \underline{$0.4059 \pm 0.0098$} & $\mathbf{0.4492 \pm 0.0122}$ & +10.7\% \\
 & HR@5    & $0.4585 \pm 0.0057$ & $0.4565 \pm 0.0126$ & $0.4619 \pm 0.0143$ & \underline{$0.4640 \pm 0.0095$} & $\mathbf{0.5145 \pm 0.0141}$ & +10.9\% \\
 & HR@10   & \underline{$0.5562 \pm 0.0052$} & $0.5515 \pm 0.0140$ & $0.5429 \pm 0.0156$ & $0.5534 \pm 0.0087$ & $\mathbf{0.6125 \pm 0.0126}$ & +10.1\% \\
 & HR@20   & \underline{$0.6387 \pm 0.0047$} & $0.6370 \pm 0.0095$ & $0.6210 \pm 0.0138$ & $0.6324 \pm 0.0119$ & $\mathbf{0.6941 \pm 0.0102}$ & +8.7\% \\
\bottomrule
\end{tabular}
\end{table*}
Table~\ref{tab:headline_results} compares all models at a matched, per-dataset inference budget: we fix the FLOPs budget to the SASRec configuration with the best validation NDCG@10 (a fixed horizon of $N_{\max}=50$ on \textit{ML-1M} and $100$ on \textit{ML-10M-L} and \textit{KuaiRand}), and report every other model at its strongest validation-selected configuration within that budget. GRU4Rec is the sole exception: because its recurrent cost is largely insensitive to horizon and already sits within the budget, we report its overall best configuration. This design places fixed-horizon truncation and latent compression on equal computational footing, and under these conditions DP-Rec is the top performer on every dataset and metric. Since it can process $4$--$5\times$ longer histories at the same cost, it recovers signal that truncation discards, improving over the strongest baseline by $+21.1\%$ on \textit{ML-10M-L} and $+11.0\%$ on \textit{KuaiRand} in NDCG@10. Its advantage over \textit{LONGER}, the fixed-block compression baseline, is largest on \textit{ML-1M} ($+24.8\%$ NDCG@10) consistent with the advantage of aligning patch boundaries to behavioral shifts over uniform, static segmentation. Taken together, these results indicate that DP-Rec turns behavioral-driven compression into consistent gains across every dataset and metric we evaluate.

\subsection{Model Efficiency}
\begin{figure*}[t]
     \centering
     \begin{subfigure}[b]{0.3\textwidth}
          \centering
          \includegraphics[width=\textwidth]{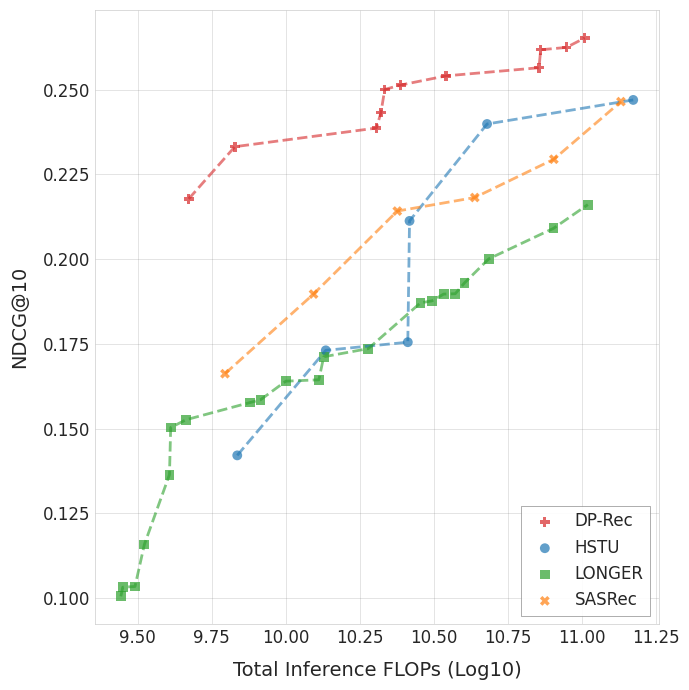}
          \caption{ML-1M}
          \label{fig:plot_left}
     \end{subfigure}
     \hfill 
     \begin{subfigure}[b]{0.3\textwidth}
          \centering
          \includegraphics[width=\textwidth]{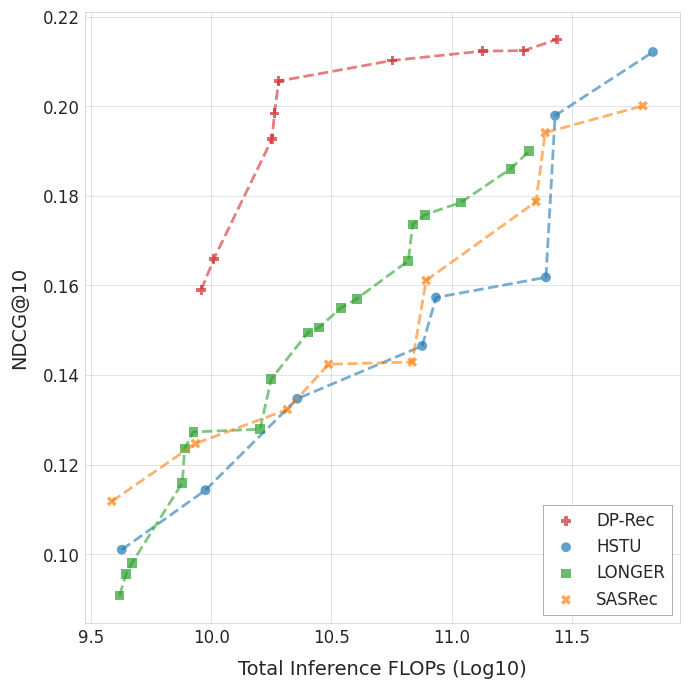}
          \caption{ML-10M-L}
          \label{fig:plot_right}
     \end{subfigure}
    \hfill
     \begin{subfigure}[b]{0.3\textwidth}
          \centering
          \includegraphics[width=\textwidth]{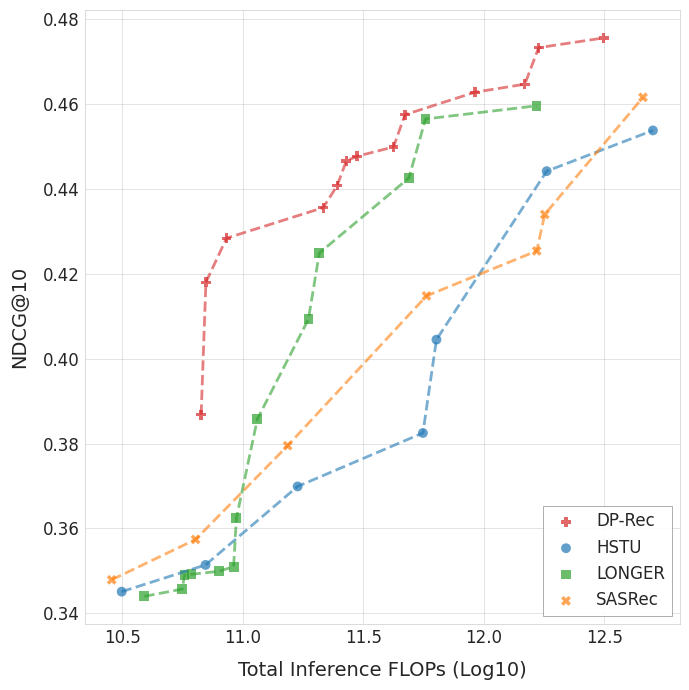}
          \caption{KuaiRand}
          \label{fig:plot_mid}
     \end{subfigure}
     \caption{Efficiency trends for budgeted inference FLOPs. We plot NDCG@10 against total inference FLOPs ($\log_{10}$ scale). \textbf{DP-Rec} (red line) consistently defines the upper-left frontier, demonstrating its ability to either (i) achieve higher recommendation quality at the same inference cost, or (ii) reach equivalent quality levels with a significantly reduced computational footprint compared to baselines.}
     \label{fig:pareto}
     \Description{Line plot showing the Pareto curve for each model. The y-axis is Normalized Discounted Cumulative Gain at 10 (NDCG@10), the x-axis is total inference Floating Point Operations on a log base 10 scale. Each model is a different color with a red line for DP-Rec above the others, indicating better accuracy when controlling for compute}
\end{figure*}
To analyze the utility--efficiency trade-off, we map test \texttt{NDCG@10} against total inference FLOPs across all model configurations. We derive the Pareto frontier for each family by sorting configurations by ascending FLOPs (breaking ties with higher utility) and retaining only those whose \texttt{NDCG@10} exceeds all lower-cost predecessors. This ensures every frontier point is non-dominated, representing the maximum achievable quality for any given computational budget. The plotted Pareto curves connect only these frontier points identifying non-dominated configurations for each model family.

\paragraph{\textbf{Pareto Dominance and Efficiency Gains}}
As illustrated in Figure~\ref{fig:pareto}, \textbf{DP-Rec} (red curve) consistently defines or expands the Pareto frontier across all benchmarks, shifting the operational envelope toward the upper-left quadrant. This dominance is characterized by two primary phenomena:
\begin{itemize}
    \item \textbf{Iso-compute Quality Gains:} At equivalent FLOP budgets, \textbf{DP-Rec} yields superior utility. For instance, on \textit{KuaiRand} (Figure~\ref{fig:pareto}c), where at $10^{11.5}$ FLOPs, \textbf{DP-Rec} achieves an NDCG@10 of $\approx 0.45$, maintaining a clear lead over the nearest baseline, \textit{LONGER}.
    \item \textbf{Iso-quality Efficiency Gains:} Conversely, \textbf{DP-Rec} satisfies target utility levels at a substantially reduced computational footprint. Due to the $\log_{10}$ scale of the x-axis, small horizontal shifts represent multifold reductions in FLOPs. For example, in \textit{ML-1M} (Figure~\ref{fig:pareto}a), reaching an NDCG@10 of $0.22$ requires $\approx 7\times$ fewer FLOPs with \textbf{DP-Rec} ($\approx 10^{9.5}$) compared to fixed-horizon models like SASRec or HSTU ($\approx 10^{10.3}$).
\end{itemize}

\paragraph{\textbf{Efficiency and Adaptive Data Distribution}}
The Pareto frontiers also reveal a dataset-dependent pattern: \textit{LONGER}'s uniform, fixed-block compression is far more competitive on \textit{KuaiRand} than on the MovieLens variants, and \textbf{DP-Rec}'s margin over it shrinks accordingly (from
  +24.8\% on ML-1M to +21.1\% on ML-10M-L to +11.0\% on KuaiRand). We attribute this to how interaction structure differs across domains: \textit{KuaiRand} is a dense, high-frequency short-video feed in which recent context is strongly predictive, so uniformly compressed representations still capture most of the relevant signal, whereas the MovieLens variants accumulate over long time spans with sparser, more diverse engagement, spreading informative interactions throughout the history where uniform compression more easily blurs them. \textbf{DP-Rec}'s adaptive boundaries preserve this dispersed signal, which is why its advantage over uniform compression is largest on the more heterogeneous MovieLens histories. 

\subsection{Ablations}
Unless otherwise noted, all ablations fix the base architecture at $L_{\mathrm{lat}}=3$, $d=64$, $L_{\mathrm{loc}}=1$, $d_{\mathrm{loc}}=32$, $w=32$, $k=8$, and $\hat{M}=2$. 
\subsubsection{\textbf{Local Module Capacity}}
\begin{table*}[t]
  \centering
  \caption{Ablation on local module capacity. Bold denotes the base configuration ($1.00\times$ compute). Parentheses denote relative change ($\Delta\%$) for NDCG and compute scale for TFLOPs.}
  \label{tab:local-capacity-ablation}
  \begin{tabular}{ccccccccc}
  \toprule
  $d_{\mathrm{loc}}$ & $L_{\mathrm{loc}}$ & \textbf{\#Params}
  & \multicolumn{2}{c}{\textbf{ML-1M}}
  & \multicolumn{2}{c}{\textbf{ML-10M-L}}
  & \multicolumn{2}{c}{\textbf{KuaiRand}} \\
  \cmidrule{4-5}\cmidrule{6-7}\cmidrule{8-9}
   &  & (non-emb) & \textbf{NDCG@10} $\uparrow$ & \textbf{TFLOPs} $\downarrow$ & \textbf{NDCG@10} $\uparrow$ & \textbf{TFLOPs} $\downarrow$ & \textbf{NDCG@10} $\uparrow$ & \textbf{TFLOPs} $\downarrow$ \\
  \midrule
  \textbf{32} & \textbf{1} & \textbf{137,920} & \textbf{0.2238} & \textbf{0.0284 ($1.00\times$)} & \textbf{0.2051} & \textbf{0.0740 ($1.00\times$)} & \textbf{0.4368} & \textbf{0.5431 ($1.00\times$)} \\
  \midrule
  16  & 1 & 99,616 & 0.1909 (-14.70\%) & 0.0100 ($0.35\times$) & 0.1927 (-6.05\%) & 0.0235 ($0.32\times$) & 0.4118 (-5.72\%) & 0.1729 ($0.32\times$) \\
  64  & 1 & 224,768 & 0.2389 (+6.75\%) & 0.0989 ($3.48\times$) & 0.2108 (+2.78\%) & 0.2663 ($3.60\times$) & 0.4405 (+0.85\%) & 1.9523 ($3.59\times$) \\
  \midrule
  32  & 2 & 159,168 & 0.2219 (-0.85\%) & 0.0543 ($1.91\times$) & 0.2095 (+2.15\%) & 0.1447 ($1.96\times$) & 0.4383 (+0.34\%) & 1.0606 ($1.95\times$) \\
  32  & 3 & 180,416 & 0.2145 (-4.16\%) & 0.0800 ($2.82\times$) & 0.2010 (-2.00\%) & 0.2153 ($2.91\times$) & 0.4407 (+0.89\%) & 1.5782 ($2.91\times$) \\
  \bottomrule
  \end{tabular}
\end{table*}
The local encoder and decoder facilitate long-horizon compression via sliding-window attention. To examine how local capacity affects quality and efficiency, we ablate hidden size ($d_{\mathrm{loc}}$) and depth ($L_{\mathrm{loc}}$) while fixing all other parameters to the base configuration above. Reducing $d_{\mathrm{loc}}$ to 16 causes substantial degradation across all datasets ($-14.70\%$, $-6.05\%$, $-5.72\%$), confirming that a minimum local capacity is necessary for effective patch compression. Expanding to $d_{\mathrm{loc}}=64$ yields meaningful gains on \textit{ML-1M} ($+6.75\%$) and \textit{ML-10M-L} ($+2.78\%$) and modest gains on \textit{KuaiRand} ($+0.85\%$), though at roughly $3.5\times$ the compute cost. Increasing depth reveals a dataset-dependent pattern: $L_{\mathrm{loc}}=2$ and $L_{\mathrm{loc}}=3$ cause mild degradation on \textit{ML-1M} ($-0.85\%$, $-4.16\%$) and \textit{ML-10M-L} ($+2.15\%$, $-2.00\%$), but provide marginal gains on \textit{KuaiRand} ($+0.34\%$, $+0.89\%$), suggesting that denser interaction sequences can leverage deeper local context. Together, $d_{\mathrm{loc}}=32$ with $L_{\mathrm{loc}}=1$ offers the best quality-efficiency trade-off across datasets, though higher capacity configurations may be worth exploring for dense, high-frequency domains.

\subsubsection{\textbf{Architectural Ablations}}
\label{sec:archablate}

\begin{table}[t]
\centering
\caption{Architectural ablations under extreme compression ($\hat{M}=2$). Parentheses denote relative change ($\Delta\%$) from the base configuration.}
\label{tab:window_ablation}
\small
\setlength{\tabcolsep}{4pt}
\begin{tabular}{lccc}
\toprule
\textbf{Variant} & \textbf{ML-1M} & \textbf{ML-10M-L} & \textbf{KuaiRand} \\
\midrule
\textbf{DP-Rec (baseline)} & \textbf{0.2238} & \textbf{0.2051} & \textbf{0.4368} \\
\midrule
w/o Time-RoPE  & 0.2018 (-9.83\%) & 0.1933 (-5.75\%) & 0.4281 (-1.99\%) \\
w/o cross-attn & 0.2424 (+8.31\%) & 0.2026 (-1.22\%) & 0.4444 (+1.74\%) \\
\midrule
$k=2$  & 0.2158 (-3.57\%) & 0.2016 (-1.71\%) & 0.4389 (+0.48\%) \\
$k=32$ & 0.2302 (+2.86\%) & 0.2012 (-1.90\%) & 0.4344 (-0.55\%) \\
\midrule
$w=8$  & 0.1956 (-12.60\%) & 0.1916 (-6.58\%) & 0.4006 (-8.29\%) \\
$w=16$ & 0.2162 (-3.40\%)  & 0.1988 (-3.07\%)  & 0.4253 (-2.63\%) \\
$w=64$ & 0.2243 (+0.22\%)  & 0.2065 (+0.68\%)  & 0.4527 (+3.64\%) \\
\bottomrule
\end{tabular}
\end{table}

We evaluate the sensitivity of \textbf{DP-Rec} to local design choices under extreme compression ($\hat{M}=2$), fixing all other parameters to the base configuration above.

\textbf{Time-RoPE} removal causes consistent degradation across all datasets (Table~\ref{tab:window_ablation}), with losses of $-9.83\%$ and $-5.75\%$ on \textit{ML-1M} and \textit{ML-10M-L}, and a smaller $-1.99\%$ on \textit{KuaiRand}, confirming that continuous temporal encoding is important for capturing interaction timing, particularly on sparser MovieLens sequences.

\textbf{Cross-attention aggregator} ($k$, the number of latent queries per patch): its effect is consistently dataset-dependent at extreme compression. On \textit{ML-1M}, removing cross-attention in favour of max-pooling yields $+8.31\%$, and on \textit{KuaiRand} it similarly yields $+1.74\%$, suggesting that at $\hat{M}=2$ with only two coarse patches, max-pool aggregation is sufficient or preferable for both short and dense sequences. Only on \textit{ML-10M-L} does cross-attention remain beneficial ($-1.22\%$ without it). Varying $k$ shows small, inconsistent effects across datasets at $\hat{M}=2$, with marginal gains for $k=32$ on \textit{ML-1M} ($+2.86\%$) and near-neutral results elsewhere, confirming $k=8$ as a reasonable default but suggesting cross-attention capacity matters more at higher patch budgets.

\textbf{Sliding-window size} ($w$) exhibits a clear threshold effect: $w=8$ causes substantial utility loss across all datasets (up to $-12.60\%$ on \textit{ML-1M} and $-8.29\%$ on \textit{KuaiRand}), as insufficient local context prevents accurate resolution of behavioral shifts. $w=64$ yields consistent gains across all three datasets ($+0.22\%$, $+0.68\%$, $+3.64\%$) but at higher computational cost; $w=32$ offers the best efficiency trade-off.

\subsubsection{\textbf{Impact of Patching Scheme}}                            \begin{table}[t]
\centering
\caption{Patching-scheme ablation (Random, Fixed, Contrastive-Entropy) at $\hat{M}=50$ for \textit{ML-1M} and $\hat{M}=100$ for \textit{ML-10M-L} and \textit{KuaiRand}, varying only the boundary selection rule. Single-seed NDCG@10 scores are reported; bold marks the best per dataset.}
\label{tab:ablation_boundaries}
\begin{tabular}{lccc}
\toprule
\textbf{Dataset} & \textbf{Random} & \textbf{Fixed} & \textbf{CE} \\
\midrule
ML-1M    & 0.2400 & 0.2437 & \textbf{0.2540} \\
ML-10M-L & 0.1946 & 0.1982 & \textbf{0.2029} \\
KuaiRand & 0.4588 & 0.4634 & \textbf{0.4761} \\
\bottomrule
\end{tabular}
\end{table}

To isolate whether DP-Rec's gains stem from dynamic boundary placement rather than sequence compression alone, we vary only the boundary selection rule: \textbf{Random} (uniform random), \textbf{Fixed} (equal-sized blocks, matching LONGER's chunking~\cite{longer}), and \textbf{Contrastive Entropy (CE)} (boundaries at high-surprisal transitions; Section~\ref{sec:patching}). We evaluate at $\hat{M}=50$ for \textit{ML-1M} and $\hat{M}=100$ for \textit{ML-10M-L} and \textit{KuaiRand}, providing sufficient patch granularity for the three boundary rules to express meaningful behavioral differences. CE consistently outperforms both baselines across all three datasets (Table~\ref{tab:ablation_boundaries}). Fixed improves over Random on all three datasets ($+1.5\%$/$+1.8\%$/$+1.0\%$ on ML-1M/ML-10M-L/KuaiRand) by preserving temporal locality. CE adds a further $+4.2\%$/$+2.4\%$/$+2.7\%$ over Fixed by aligning boundaries with genuine intent shifts, confirming that high-surprisal boundary detection captures meaningful behavioral transitions across all three domains. Note that these configurations are not constrained to the headline FLOPs budget and are thus not directly comparable to Table~\ref{tab:headline_results}.

\subsection{Sensitivity to Target Patch Budget}
\label{sec:max_patches}

\begin{table}[t]
\centering
\caption{Mean test NDCG@10 vs.\ target patch budget $\hat{M}$, at fixed $N_{\text{raw}}=50$ for \textit{ML-1M} and $N_{\text{raw}}=100$ for \textit{ML-10M-L} and \textit{KuaiRand}. Cross-attention is disabled to isolate patch budget sensitivity from cross-attention capacity effects. Bold marks best per dataset.}
\label{tab:target_patches}
\small
\begin{tabular}{cccc}
\toprule
\textbf{Target Patches ($\hat{M}$)} & \textbf{ML-1M} & \textbf{ML-10M-L} & \textbf{KuaiRand} \\
\midrule
2   & 0.2346 & 0.2050 & 0.4405 \\
5   & 0.2446 & 0.2026 & 0.4414 \\
10  & 0.2414 & 0.2070 & 0.4504 \\
50  & \textbf{0.2499} & 0.1996 & \textbf{0.4578} \\
100 & 0.2486 & 0.1949 & 0.4502 \\
250 &    --     &\textbf{0.2088} & 0.4417 \\
\bottomrule
\end{tabular}
\end{table}

The target patch budget $\hat{M}$ controls the bottleneck between local interaction processing and the temporal latent transformer. Table~\ref{tab:target_patches} reports test NDCG@10 at each budget, at fixed $N_{\text{raw}}=50$ for \textit{ML-1M} and $N_{\text{raw}}=100$ for \textit{ML-10M-L} and \textit{KuaiRand}. Cross-attention aggregation is disabled for this ablation to isolate patch budget sensitivity from the confounding effect of cross-attention capacity becoming degenerate when $k > n_p$ at high $\hat{M}$ (Section~\ref{sec:archablate}). Both \textit{ML-1M} and \textit{KuaiRand} peak at $\hat{M}=50$, while \textit{ML-10M-L} continues to improve up to $\hat{M}=250$, consistent with its longer interaction histories requiring finer-grained segmentation. Despite this variation, DP-Rec is highly resilient to aggressive compression: even at $\hat{M}=2$ it retains ${\sim}94\%$ of peak NDCG@10 on \textit{ML-1M}, ${\sim}98\%$ on \textit{ML-10M-L}, and ${\sim}96\%$ on \textit{KuaiRand}, confirming that the contrastive boundary detector packs the most salient behavioral shifts into very few patches.

\section{Qualitative Analysis}
\label{sec:qualitative_grouping}
\begin{figure}[t]
    \centering
    \includegraphics[width=\linewidth]{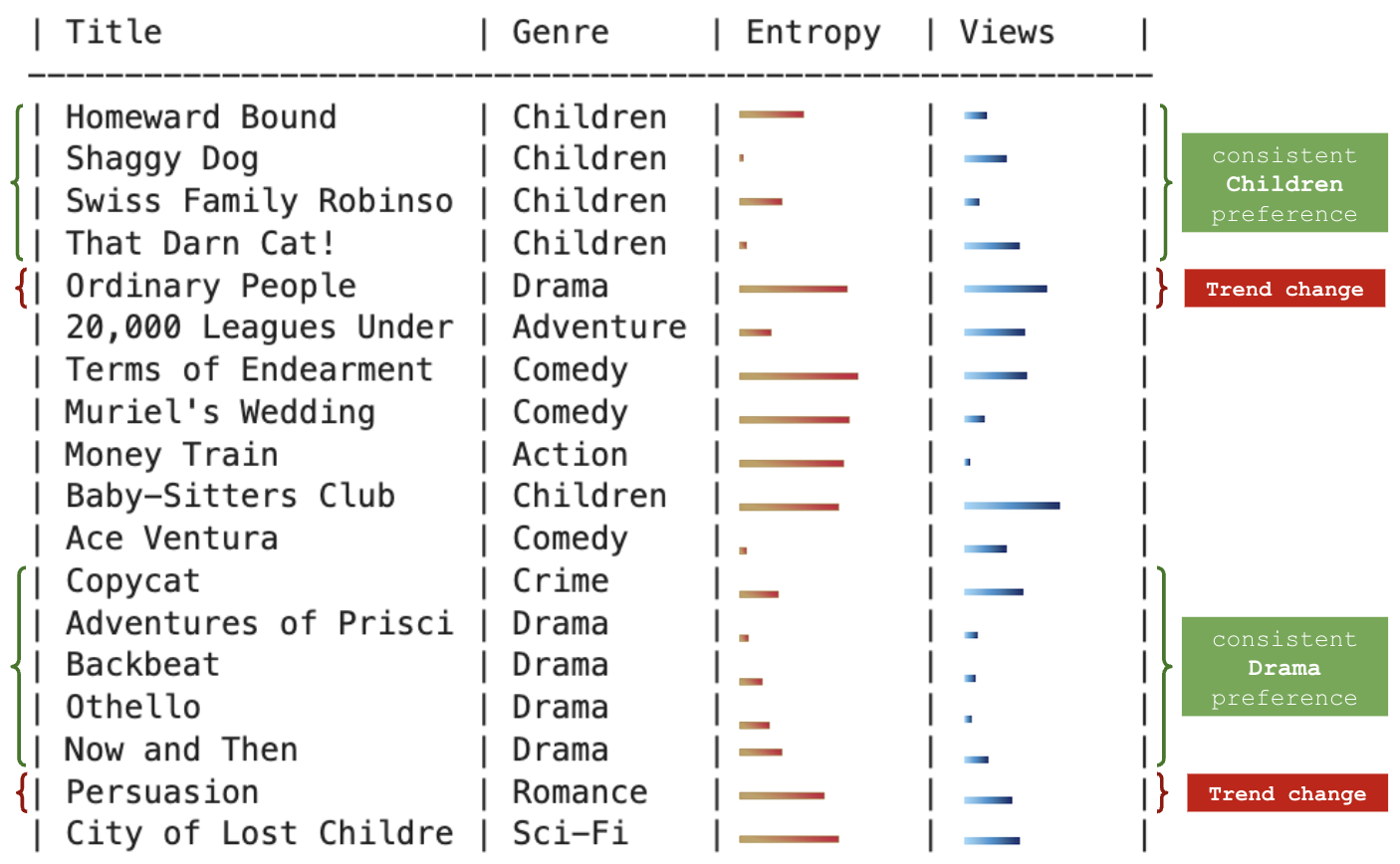}
    \caption{Qualitative analysis of contrastive entropy surprise (red) and item views (blue) for \textit{ML-1M}. Entropy remains low during stable genre engagement but spikes sharply at out-of-distribution transitions, demonstrating how \textbf{DP-Rec} adaptively allocates boundaries to salient behavioral shifts.}
    \label{fig:analysis}
    \Description{Table of viewed movies with title, genre, entropy, and views columns shown as inline bars. Green brackets group genre-stable rows with low entropy; red brackets mark genre shifts with high entropy.}
\end{figure}
Visualizations of interaction sequences in Figure~\ref{fig:analysis} confirm that \textbf{DP-Rec} functions as an adaptive segmentation module, partitioning user histories into variable-length behavioral patches. We observe that contrastive entropy surprise serves as a robust proxy for these latent transitions: scores remain suppressed during thematically stable episodes, such as consistent genre engagement in \textit{ML-1M}, allowing the local encoder to condense redundant interactions. Conversely, sharp entropy spikes occur during ``out-of-distribution'' events, including abrupt genre shifts or changes in interaction rhythm. This dynamic alignment ensures that latent representations are allocated according to behavioral complexity rather than fixed temporal intervals.

\section{Limitations and Future Work}

While \textbf{DP-Rec} demonstrates strong efficiency gains under FLOPs-matched comparisons, several limitations remain. First, a gap exists between theoretical complexity and practical throughput, as the patch aggregation kernels require further low-level optimization to match highly optimized fixed-length serving systems. Second, in our staged training pipeline, the boundary detector is pre-trained and frozen. While this approach to staged training is effective, learning patch boundaries jointly with the recommendation objective is a promising direction. Third, hardware constraints precluded study of ultra-long horizons (N > 500), where compression benefits may be even more pronounced. We hypothesize that richer multi-channel behavioral signals would provide more informative boundaries for contextual segmentation. More broadly, our contrastive-entropy detector is one effective instantiation of the boundary module, and identifying the best boundary-detection design (spanning alternative surprise signals, temporal cues, and training objectives) remains an open direction for future work.

\section{Conclusions}

In this study we introduced DP-Rec, a dynamic latent patching architecture with temporally adaptive boundary detection. DP-Rec couples item-level processing with explicit temporal modeling, combining temporal signals with contrastive entropy surprise to adaptively determine patch boundaries. This allows the model to overcome the quadratic constraints of standard Transformers while preserving high-fidelity representations of extended user histories. Experiments across three widely known public benchmarks confirm a superior computational efficiency–accuracy trade-off over both uncompressed and fixed-rate compression baselines, demonstrating that computation can follow the information density of user behavior rather than the arbitrary constraints of fixed boundaries. We invite future research to build upon this work to explore the potential of adaptive sequence modeling in recommender systems. \footnote{In preparing this paper, the authors utilized Gemini to improve the clarity, grammar, and presentation of the manuscript. No AI tools were used for model design, experimentation, or analysis; all technical contributions remain exclusively those of the authors.}

\begin{acks}
We sincerely thank Giri Iyengar and Bayan Bruss for their sponsorship, productive discussions, and critical reviews of this work.
\end{acks}


\bibliographystyle{acm}
\bibliography{biblio}

@String{Springer = "Springer-Verlag" }

@online{sasrectorch,
  author  = {Zan Huang},
  title   = {SASRec.pytorch},
  year    = {2020},
  url     = {https://github.com/pmixer/SASRec.pytorch}
}

@article{hoffmann2022training,
  title={Training compute-optimal large language models},
  author={Hoffmann, Jordan and Borgeaud, Sebastian and Mensch, Arthur and Buchatskaya, Elena and Cai, Trevor and Rutherford, Eliza and Casas, DDL and Hendricks, Lisa Anne and Welbl, Johannes and Clark, Aidan and others},
  journal={arXiv preprint arXiv:2203.15556},
  volume={10},
  year={2022}
}

@misc{rectools,
  author = {MobileTeleSystems},
  title = {RecTools: Library to build Recommendation Systems},
  year = {2024},
  publisher = {GitHub},
  journal = {GitHub repository},
  howpublished = {\url{https://github.com/MobileTeleSystems/RecTools}},
}

@article{gru4rec,
  title={Session-based recommendations with recurrent neural networks},
  author={Hidasi, Bal{\'a}zs and Karatzoglou, Alexandros and Baltrunas, Linas and Tikk, Domonkos},
  journal={arXiv preprint arXiv:1511.06939},
  year={2015}
}

@article{hstu,
  title={Actions speak louder than words: Trillion-parameter sequential transducers for generative recommendations},
  author={Zhai, Jiaqi and Liao, Lucy and Liu, Xing and Wang, Yueming and Li, Rui and Cao, Xuan and Gao, Leon and Gong, Zhaojie and Gu, Fangda and He, Michael and others},
  journal={arXiv preprint arXiv:2402.17152},
  year={2024}
}

@article{block,
  title={Block transformer: Global-to-local language modeling for fast inference},
  author={Ho, Namgyu and Bae, Sangmin and Kim, Taehyeon and Jo, Hyunjik and Kim, Yireun and Schuster, Tal and Fisch, Adam and Thorne, James and Yun, Se-Young},
  journal={Advances in Neural Information Processing Systems},
  volume={37},
  pages={48740--48783},
  year={2024}
}

@inproceedings{blt,
  title={Byte latent transformer: Patches scale better than tokens},
  author={Pagnoni, Artidoro and Pasunuru, Ramakanth and Rodriguez, Pedro and Nguyen, John and Muller, Benjamin and Li, Margaret and Zhou, Chunting and Yu, Lili and Weston, Jason E and Zettlemoyer, Luke and others},
  booktitle={Proceedings of the 63rd Annual Meeting of the Association for Computational Linguistics (Volume 1: Long Papers)},
  pages={9238--9258},
  year={2025}
}

@inproceedings{longer,
  title={Longer: Scaling up long sequence modeling in industrial recommenders},
  author={Chai, Zheng and Ren, Qin and Xiao, Xijun and Yang, Huizhi and Han, Bo and Zhang, Sijun and Chen, Di and Lu, Hui and Zhao, Wenlin and Yu, Lele and others},
  booktitle={Proceedings of the Nineteenth ACM Conference on Recommender Systems},
  pages={247--256},
  year={2025}
}

@article{survey2023,
  title={A survey on user behavior modeling in recommender systems},
  author={He, Zhicheng and Liu, Weiwen and Guo, Wei and Qin, Jiarui and Zhang, Yingxue and Hu, Yaochen and Tang, Ruiming},
  journal={arXiv preprint arXiv:2302.11087},
  year={2023}
}

@inproceedings{scalinglaw,
  title={Scaling law of large sequential recommendation models},
  author={Zhang, Gaowei and Hou, Yupeng and Lu, Hongyu and Chen, Yu and Zhao, Wayne Xin and Wen, Ji-Rong},
  booktitle={Proceedings of the 18th ACM Conference on Recommender Systems},
  pages={444--453},
  year={2024}
}

@inproceedings{scalingseq,
  title={Scaling sequential recommendation models with transformers},
  author={Zivic, Pablo and Vazquez, Hernan and S{\'a}nchez, Jorge},
  booktitle={Proceedings of the 47th international ACM SIGIR conference on research and development in information retrieval},
  pages={1567--1577},
  year={2024}
}

@article{scaling,
  title={Scaling new frontiers: Insights into large recommendation models},
  author={Guo, Wei and Wang, Hao and Zhang, Luankang and Chin, Jin Yao and Liu, Zhongzhou and Cheng, Kai and Pan, Qiushi and Lee, Yi Quan and Xue, Wanqi and Shen, Tingjia and others},
  journal={arXiv preprint arXiv:2412.00714},
  year={2024}
}

@article{rethinkseqrec,
  title={Rethinking Lifelong Sequential Recommendation with Incremental Multi-Interest Attention},
  author={YongjiWu, Lu Yin and Lian, Defu and Yin, Mingyang and Gong, Neil Zhenqiang and Zhou, Jingren and Yang, Hongxia},
  journal={arXiv preprint arXiv:2105.14060},
  year={2021}
}

@article{scalingpre,
  title={Scaling generative pre-training for user ad activity sequences},
  author={Chitlangia, Sharad and Kesari, Krishna Reddy and Agarwal, Rajat},
  year={2023}
}

@article{attn,
  title={Attention is all you need},
  author={Vaswani, Ashish and Shazeer, Noam and Parmar, Niki and Uszkoreit, Jakob and Jones, Llion and Gomez, Aidan N and Kaiser, {\L}ukasz and Polosukhin, Illia},
  journal={Advances in neural information processing systems},
  volume={30},
  year={2017}
}

@inproceedings{pinnerformer,
  title={Pinnerformer: Sequence modeling for user representation at pinterest},
  author={Pancha, Nikil and Zhai, Andrew and Leskovec, Jure and Rosenberg, Charles},
  booktitle={Proceedings of the 28th ACM SIGKDD conference on knowledge discovery and data mining},
  pages={3702--3712},
  year={2022}
}

@inproceedings{sum,
  title={Scaling User Modeling: Large-scale Online User Representations for Ads Personalization in Meta},
  author={Zhang, Wei and Li, Dai and Liang, Chen and Zhou, Fang and Zhang, Zhongke and Wang, Xuewei and Li, Ru and Zhou, Yi and Huang, Yaning and Liang, Dong and others},
  booktitle={Companion Proceedings of the ACM Web Conference 2024},
  pages={47--55},
  year={2024}
}

@article{dce,
  title={Dynamic Customer Embeddings for Financial Service Applications},
  author={Chitsazan, Nima and Sharpe, Samuel and Katariya, Dwipam and Cheng, Qianyu and Rajasethupathy, Karthik},
  journal={arXiv preprint arXiv:2106.11880},
  year={2021}
}

@inproceedings{twin,
  title={TWIN: TWo-stage interest network for lifelong user behavior modeling in CTR prediction at kuaishou},
  author={Chang, Jianxin and Zhang, Chenbin and Fu, Zhiyi and Zang, Xiaoxue and Guan, Lin and Lu, Jing and Hui, Yiqun and Leng, Dewei and Niu, Yanan and Song, Yang and others},
  booktitle={Proceedings of the 29th ACM SIGKDD Conference on Knowledge Discovery and Data Mining},
  pages={3785--3794},
  year={2023}
}

@inproceedings{practice,
  title={Practice on long sequential user behavior modeling for click-through rate prediction},
  author={Pi, Qi and Bian, Weijie and Zhou, Guorui and Zhu, Xiaoqiang and Gai, Kun},
  booktitle={Proceedings of the 25th ACM SIGKDD international conference on knowledge discovery \& data mining},
  pages={2671--2679},
  year={2019}
}

@inproceedings{lmnr,
  title={Large Memory Network for Recommendation},
  author={Lu, Hui and Chai, Zheng and Zheng, Yuchao and Chen, Zhe and Xie, Deping and Xu, Peng and Zhou, Xun and Wu, Di},
  booktitle={Companion Proceedings of the ACM on Web Conference 2025},
  pages={1162--1166},
  year={2025}
}

@article{marm,
  title={Marm: Unlocking the future of recommendation systems through memory augmentation and scalable complexity},
  author={Lv, Xiao and Cao, Jiangxia and Guan, Shijie and Zhou, Xiaoyou and Qi, Zhiguang and Zang, Yaqiang and Li, Ming and Wang, Ben and Gai, Kun and Zhou, Guorui},
  journal={arXiv preprint arXiv:2411.09425},
  year={2024}
}

@inproceedings{coseRenn,
  title={Contextual and sequential user embeddings for large-scale music recommendation},
  author={Hansen, Casper and Hansen, Christian and Maystre, Lucas and Mehrotra, Rishabh and Brost, Brian and Tomasi, Federico and Lalmas, Mounia},
  booktitle={Proceedings of the 14th ACM Conference on Recommender Systems},
  pages={53--62},
  year={2020}
}

@article{fintrec,
  title={FinTRec: Transformer Based Unified Contextual Ads Targeting and Personalization for Financial Applications},
  author={Katariya, Dwipam and Varma, Snehita and Shreemali, Akshat and Wu, Benjamin and Mishra, Kalanand and Mohanty, Pranab},
  journal={arXiv preprint arXiv:2511.14865},
  year={2025}
}

@article{timesync,
  title={Timesync: Temporal intent modelling with synchronized context encodings for financial service applications},
  author={Katariya, Dwipam and Origgi, Juan Manuel and Wang, Yage and Caputo, Thomas},
  journal={arXiv preprint arXiv:2410.12825},
  year={2024}
}

@article{rnn4rec,
  title={Session-based recommendations with recurrent neural networks},
  author={Hidasi, Bal{\'a}zs and Karatzoglou, Alexandros and Baltrunas, Linas and Tikk, Domonkos},
  journal={arXiv preprint arXiv:1511.06939},
  year={2015}
}

@inproceedings{sasrec,
  title={Self-attentive sequential recommendation},
  author={Kang, Wang-Cheng and McAuley, Julian},
  booktitle={2018 IEEE international conference on data mining (ICDM)},
  pages={197--206},
  year={2018},
  organization={IEEE}
}

@inproceedings{bert4rec,
  title={BERT4Rec: Sequential recommendation with bidirectional encoder representations from transformer},
  author={Sun, Fei and Liu, Jun and Wu, Jian and Pei, Changhua and Lin, Xiao and Ou, Wenwu and Jiang, Peng},
  booktitle={Proceedings of the 28th ACM international conference on information and knowledge management},
  pages={1441--1450},
  year={2019}
}

@inproceedings{fpmc,
  title={Factorizing personalized markov chains for next-basket recommendation},
  author={Rendle, Steffen and Freudenthaler, Christoph and Schmidt-Thieme, Lars},
  booktitle={Proceedings of the 19th international conference on World wide web},
  pages={811--820},
  year={2010}
}

@article{bpr,
  title={BPR: Bayesian personalized ranking from implicit feedback},
  author={Rendle, Steffen and Freudenthaler, Christoph and Gantner, Zeno and Schmidt-Thieme, Lars},
  journal={arXiv preprint arXiv:1205.2618},
  year={2012}
}

@inproceedings{ncf,
  title={Neural collaborative filtering},
  author={He, Xiangnan and Liao, Lizi and Zhang, Hanwang and Nie, Liqiang and Hu, Xia and Chua, Tat-Seng},
  booktitle={Proceedings of the 26th international conference on world wide web},
  pages={173--182},
  year={2017}
}

@article{patchtst,
  title={A time series is worth 64 words: Long-term forecasting with transformers},
  author={Nie, Yuqi and Nguyen, Nam H and Sinthong, Phanwadee and Kalagnanam, Jayant},
  journal={arXiv preprint arXiv:2211.14730},
  year={2022}
}

@article{mspatch,
  title={MSPatch: A multi-scale patch mixing framework for multivariate time series forecasting},
  author={Cao, Yizhi and Tian, Zijian and Guo, Wenjie and Liu, Xinggao},
  journal={Expert Systems with Applications},
  volume={273},
  pages={126849},
  year={2025},
  publisher={Elsevier}
}

@inproceedings{hdmixer,
  title={Hdmixer: Hierarchical dependency with extendable patch for multivariate time series forecasting},
  author={Huang, Qihe and Shen, Lei and Zhang, Ruixin and Cheng, Jiahuan and Ding, Shouhong and Zhou, Zhengyang and Wang, Yang},
  booktitle={Proceedings of the AAAI conference on artificial intelligence},
  volume={38},
  number={11},
  pages={12608--12616},
  year={2024}
}

@inproceedings{morai,
  title={Unified training of universal time series forecasting transformers},
  author={Woo, Gerald and Liu, Chenghao and Kumar, Akshat and Xiong, Caiming and Savarese, Silvio and Sahoo, Doyen},
  booktitle={Forty-first International Conference on Machine Learning},
  year={2024}
}

@inproceedings{adapatch,
  title={AdaPatch: Adaptive Patch-Level Modeling for Non-Stationary Time Series Forecasting},
  author={Liu, Kun and Duan, Zhongjie and Chen, Cen and Wang, Yanhao and Cheng, Dawei and Liang, Yuqi},
  booktitle={Proceedings of the 34th ACM International Conference on Information and Knowledge Management},
  pages={1882--1891},
  year={2025}
}

@inproceedings{apn,
  title={Rethinking irregular time series forecasting: A simple yet effective baseline},
  author={Liu, Xvyuan and Qiu, Xiangfei and Wu, Xingjian and Li, Zhengyu and Guo, Chenjuan and Hu, Jilin and Yang, Bin},
  booktitle={Proceedings of the AAAI Conference on Artificial Intelligence},
  volume={40},
  number={28},
  pages={23873--23881},
  year={2026}
}

@article{rope,
  title={Roformer: Enhanced transformer with rotary position embedding},
  author={Su, Jianlin and Ahmed, Murtadha and Lu, Yu and Pan, Shengfeng and Bo, Wen and Liu, Yunfeng},
  journal={Neurocomputing},
  volume={568},
  pages={127063},
  year={2024},
  publisher={Elsevier}
}

@article{srnet,
  title={Enhancing time series forecasting through selective representation spaces: A patch perspective},
  author={Wu, Xingjian and Qiu, Xiangfei and Cheng, Hanyin and Li, Zhengyu and Hu, Jilin and Guo, Chenjuan and Yang, Bin},
  journal={arXiv preprint arXiv:2510.14510},
  year={2025}
}

@inproceedings{timemosaic,
  title={Timemosaic: Temporal heterogeneity guided time series forecasting via adaptive granularity patch and segment-wise decoding},
  author={Ding, Kuiye and Fan, Fanda and Hou, Chunyi and Wang, Zheya and Wang, Lei and Yang, Zhengxin and Zhan, Jianfeng},
  booktitle={Proceedings of the AAAI Conference on Artificial Intelligence},
  volume={40},
  number={25},
  pages={20790--20798},
  year={2026}
}

@article{entrope,
  title={EntroPE: Entropy-Guided Dynamic Patch Encoder for Time Series Forecasting},
  author={Abeywickrama, Sachith and Eldele, Emadeldeen and Wu, Min and Li, Xiaoli and Yuen, Chau},
  journal={arXiv preprint arXiv:2509.26157},
  year={2025}
}

@inproceedings{perciever,
  title={Perceiver: General perception with iterative attention},
  author={Jaegle, Andrew and Gimeno, Felix and Brock, Andy and Vinyals, Oriol and Zisserman, Andrew and Carreira, Joao},
  booktitle={International conference on machine learning},
  pages={4651--4664},
  year={2021},
  organization={PMLR}
}

@article{survey2026,
  title={A Survey of User Lifelong Behavior Modeling: Perspectives on Efficiency and Effectiveness},
  author={Zhou, Rui and Jia, Qinglin and Chen, Bo and Xu, Peng and Sun, Yijia and Lou, Siyuan and Fu, Chaoxin and Fu, Mengyuan and Shen, Guoming and Zhou, Zheli and others},
  year={2026},
  publisher={Preprints}
}

@article{onerec,
  title={Onerec: Unifying retrieve and rank with generative recommender and iterative preference alignment},
  author={Deng, Jiaxin and Wang, Shiyao and Cai, Kuo and Ren, Lejian and Hu, Qigen and Ding, Weifeng and Luo, Qiang and Zhou, Guorui},
  journal={arXiv preprint arXiv:2502.18965},
  year={2025}
}

@article{llmcdsr,
  title={Llmcdsr: Enhancing cross-domain sequential recommendation with large language models},
  author={Xin, Haoran and Sun, Ying and Wang, Chao and Xiong, Hui},
  journal={ACM Transactions on Information Systems},
  volume={43},
  number={5},
  pages={1--33},
  year={2025},
  publisher={ACM New York, NY}
}

@inproceedings{skilling,
  title={Killing two birds with one stone: Unifying retrieval and ranking with a single generative recommendation model},
  author={Zhang, Luankang and Song, Kenan and Lee, Yi Quan and Guo, Wei and Wang, Hao and Li, Yawen and Guo, Huifeng and Liu, Yong and Lian, Defu and Chen, Enhong},
  booktitle={Proceedings of the 48th International ACM SIGIR Conference on Research and Development in Information Retrieval},
  pages={2224--2234},
  year={2025}
}

@inproceedings{transact,
  title={Transact: Transformer-based realtime user action model for recommendation at pinterest},
  author={Xia, Xue and Eksombatchai, Pong and Pancha, Nikil and Badani, Dhruvil Deven and Wang, Po-Wei and Gu, Neng and Joshi, Saurabh Vishwas and Farahpour, Nazanin and Zhang, Zhiyuan and Zhai, Andrew},
  booktitle={Proceedings of the 29th ACM SIGKDD Conference on Knowledge Discovery and Data Mining},
  pages={5249--5259},
  year={2023}
}

@inproceedings{transactv2,
  title={TransAct V2: Lifelong User Action Sequence Modeling on Pinterest Recommendation},
  author={Xia, Xue and Joshi, Saurabh and Rajesh, Kousik and Li, Kangnan and Lu, Yangyi and Pancha, Nikil and Badani, Dhruvil and Xu, Jiajing and Eksombatchai, Pong},
  booktitle={Proceedings of the 34th ACM International Conference on Information and Knowledge Management},
  pages={6881--6882},
  year={2025}
}

@article{make,
  title={Make It Long, Keep It Fast: End-to-End 10k-Sequence Modeling at Billion Scale on Douyin},
  author={Guan, Lin and Yang, Jia-Qi and Zhao, Zhishan and Zhang, Beichuan and Sun, Bo and Luo, Xuanyuan and Ni, Jinan and Li, Xiaowen and Qi, Yuhang and Fan, Zhifang and others},
  journal={arXiv preprint arXiv:2511.06077},
  year={2025}
}

@inproceedings{dstn,
  title={Deep spatio-temporal neural networks for click-through rate prediction},
  author={Ouyang, Wentao and Zhang, Xiuwu and Li, Li and Zou, Heng and Xing, Xin and Liu, Zhaojie and Du, Yanlong},
  booktitle={Proceedings of the 25th ACM SIGKDD International Conference on Knowledge Discovery \& Data Mining},
  pages={2078--2086},
  year={2019}
}

@inproceedings{dfn,
  title={Deep feedback network for recommendation},
  author={Xie, Ruobing and Ling, Cheng and Wang, Yalong and Wang, Rui and Xia, Feng and Lin, Leyu},
  booktitle={Proceedings of the twenty-ninth international conference on international joint conferences on artificial intelligence},
  pages={2519--2525},
  year={2021}
}

@inproceedings{dien,
  title={Deep interest evolution network for click-through rate prediction},
  author={Zhou, Guorui and Mou, Na and Fan, Ying and Pi, Qi and Bian, Weijie and Zhou, Chang and Zhu, Xiaoqiang and Gai, Kun},
  booktitle={Proceedings of the AAAI conference on artificial intelligence},
  volume={33},
  number={01},
  pages={5941--5948},
  year={2019}
}

@inproceedings{xdm,
  title={Xdm: Improving sequential deep matching with unclicked user behaviors for recommender system},
  author={Lv, Fuyu and Li, Mengxue and Guo, Tonglei and Yu, Changlong and Sun, Fei and Jin, Taiwei and Ng, Wilfred},
  booktitle={International Conference on Database Systems for Advanced Applications},
  pages={364--376},
  year={2022},
  organization={Springer}
}

@inproceedings{racp,
  title={Modeling users' contextualized page-wise feedback for click-through rate prediction in e-commerce search},
  author={Fan, Zhifang and Ou, Dan and Gu, Yulong and Fu, Bairan and Li, Xiang and Bao, Wentian and Dai, Xin-Yu and Zeng, Xiaoyi and Zhuang, Tao and Liu, Qingwen},
  booktitle={Proceedings of the fifteenth ACM international conference on web search and data mining},
  pages={262--270},
  year={2022}
}

@article{temp4ctr,
  title={Time-aligned Exposure-enhanced Model for Click-Through Rate Prediction},
  author={Zhang, Hengyu and Meng, Chang and Guo, Wei and Guo, Huifeng and Zhu, Jieming and Zhao, Guangpeng and Tang, Ruiming and Li, Xiu},
  journal={arXiv preprint arXiv:2308.09966},
  year={2023}
}

@inproceedings{sim,
  title={Search-based user interest modeling with lifelong sequential behavior data for click-through rate prediction},
  author={Pi, Qi and Zhou, Guorui and Zhang, Yujing and Wang, Zhe and Ren, Lejian and Fan, Ying and Zhu, Xiaoqiang and Gai, Kun},
  booktitle={Proceedings of the 29th ACM International Conference on Information \& Knowledge Management},
  pages={2685--2692},
  year={2020}
}

@inproceedings{ubr4ctr,
  title={User behavior retrieval for click-through rate prediction},
  author={Qin, Jiarui and Zhang, Weinan and Wu, Xin and Jin, Jiarui and Fang, Yuchen and Yu, Yong},
  booktitle={Proceedings of the 43rd international ACM SIGIR conference on research and development in information retrieval},
  pages={2347--2356},
  year={2020}
}

@inproceedings{twinv2,
  title={Twin v2: Scaling ultra-long user behavior sequence modeling for enhanced ctr prediction at kuaishou},
  author={Si, Zihua and Guan, Lin and Sun, ZhongXiang and Zang, Xiaoxue and Lu, Jing and Hui, Yiqun and Cao, Xingchao and Yang, Zeyu and Zheng, Yichen and Leng, Dewei and others},
  booktitle={Proceedings of the 33rd ACM International Conference on Information and Knowledge Management},
  pages={4890--4897},
  year={2024}
}

@article{recmamba,
  title={Uncovering selective state space model's capabilities in lifelong sequential recommendation},
  author={Yang, Jiyuan and Li, Yuanzi and Zhao, Jingyu and Wang, Hanbing and Ma, Muyang and Ma, Jun and Ren, Zhaochun and Zhang, Mengqi and Xin, Xin and Chen, Zhumin and others},
  journal={arXiv preprint arXiv:2403.16371},
  year={2024}
}

@inproceedings{personexprt,
  title={Efficient Sequential Recommendation for Long Term User Interest Via Personalization},
  author={Zhang, Qiang and Yu, Hanchao and Ji, Ivan and Yuan, Chen and Zhang, Yi and Liu, Chihuang and Wang, Xiaolong and Lambert, Christopher E and Chen, Ren and Kovacs, Chen and others},
  booktitle={2025 IEEE International Conference on Data Mining (ICDM)},
  pages={913--922},
  year={2025},
  organization={IEEE}
}

@article{ml1m,
  title={The movielens datasets: History and context},
  author={Harper, F Maxwell and Konstan, Joseph A},
  journal={Acm transactions on interactive intelligent systems (tiis)},
  volume={5},
  number={4},
  pages={1--19},
  year={2015},
  publisher={Acm New York, NY, USA}
}

@inproceedings{kuairand,
  title={Kuairand: An unbiased sequential recommendation dataset with randomly exposed videos},
  author={Gao, Chongming and Li, Shijun and Zhang, Yuan and Chen, Jiawei and Li, Biao and Lei, Wenqiang and Jiang, Peng and He, Xiangnan},
  booktitle={Proceedings of the 31st ACM international conference on information \& knowledge management},
  pages={3953--3957},
  year={2022}
}

@article{reinpatch,
  title={Dynamic Tokenization via Reinforcement Patching: End-to-end Training and Zero-shot Transfer},
  author={Wu, Yulun and Ankireddy, Sravan Kumar and Sharpe, Samuel and Seleznev, Nikita and Yuan, Dehao and Kim, Hyeji and Nguyen, Nam H},
  journal={arXiv preprint arXiv:2603.26097},
  year={2026}
}

@inproceedings{
srsnet,
title={Enhancing Time Series Forecasting through Selective Representation Spaces: A Patch Perspective},
author={Xingjian Wu and Xiangfei Qiu and Hanyin Cheng and Zhengyu Li and Jilin Hu and Chenjuan Guo and Bin Yang},
booktitle={The Thirty-ninth Annual Conference on Neural Information Processing Systems},
year={2026},
url={https://openreview.net/forum?id=BirE0jYKt0}
}

\appendix

\section{Relevant Work}
\label{sec:relevant_work}
\subsection{Long Sequential Recommendation}
Sequential recommendation has attracted substantial attention in both academia and industry \cite{fintrec, pinnerformer, coseRenn, onerec, llmcdsr, skilling, transactv2, make, transact, hstu}. Transformer based models, especially SASRec \cite{sasrec} and BERT4Rec \cite{bert4rec}, have become foundational in this area by demonstrating the effectiveness of self attention for modeling user behavior sequences. These methods significantly advanced the state of the art over earlier sequential models such as GRU4Rec \cite{rnn4rec}, FPMC \cite{fpmc}, and CASER, as well as non sequential approaches including BPR-MF \cite{bpr} and NCF \cite{ncf}. Nevertheless, most existing methods are designed for relatively short or truncated interaction histories. Scaling these architectures to lifelong user logs is restricted not only by the quadratic complexity of self-attention but also by the increasing heterogeneity of user interests. These factors render assumptions of signal homogeneity and limited temporal scope increasingly unrealistic in large-scale settings \cite{survey2026}. Early work in this area focused primarily on explicit feedback, such as numerical ratings. Subsequent studies, however, established that Click-Through Rate (CTR) prediction—the estimation of a user's probability of engagement—benefits from jointly modeling heterogeneous behavioral signals. Specifically, these models incorporate both positive signals, such as successful clicks, and negative signals, such as unclicked impressions where an item was exposed to the user but failed to elicit an interaction. In particular, DSTN \cite{dstn}, DFN \cite{dfn}, DIEN \cite{dien}, and XDM \cite{xdm} demonstrated the importance of incorporating such multi-feedback information. Because implicit signals such as views and impressions are substantially more abundant than clicks, their inclusion greatly increases sequence length and makes full long-range sequence modeling computationally infeasible. To better exploit richer behavioral information, RACP \cite{racp} introduced a hierarchical attention architecture that captures both intra-page dependencies among co displayed items and inter-page interest transitions across sessions. Despite these advances, prior studies largely overlook the impact of sequence truncation in multi-feedback settings, where truncation can weaken cross-signal dependencies and limit the joint modeling of different feedback types. TEM4CTR \cite{temp4ctr} addresses this issue through a three-stage framework that first retrieves unclicked records near clicked ones, then applies attention to identify relevant exposure information, and finally jointly extracts latent user interests from clicked and unclicked sequences for CTR prediction. Hence, recent practice increasingly favors integrating a broader range of user behavioral signals into recommendation models. While this enriches user representation, it also leads to substantial sequence length growth, thereby making efficient sequence processing a critical requirement.

\subsection{Sequence Compression in RecSys}  
Under strict inference budget constraints, industrial RecSys often rely on staged compression pipelines to make long behavioral histories tractable. These include two stage retrieval methods such as TWIN \cite{twin}, TWIN V2 \cite{twinv2}, and SIM \cite{sim}, pre trained embedding approaches such as PINNERFORMER \cite{pinnerformer}, SUM \cite{sum}, DCE \cite{dce}, CoSeRNN \cite{coseRenn}, and FinTRec \cite{fintrec}, as well as memory augmented methods such as LMN \cite{lmnr}, Marm \cite{marm}, MIMN \cite{practice}. While these methods differ in design, they share a common goal of reducing sequence length before downstream modeling. For example, TWIN V2 \cite{twinv2} compresses histories through clustering, whereas TWIN \cite{twin}, UBR4CTR \cite{ubr4ctr}, SIM \cite{sim}, and TransAct \cite{transact} retrieve a target relevant subset of interactions followed by lightweight self attention. Although effective in reducing computation, these strategies often sacrifice signal fidelity. Because compression is performed upstream, they are typically not optimized end-to-end with downstream item relevance, and may therefore miss important real-time behavioral signals \cite{transactv2}. This separation between compression and recommendation also limits the model’s ability to preserve the fine-grained structure of long behavioral histories. In addition, staged designs introduce operational overhead by requiring coordination between upstream retraining and downstream serving. Recent work has attempted to address these limitations more directly.  TransAct V2 \cite{transactv2} demonstrates the benefit of combining lifelong histories with real time signals, but restricts the lifelong component to high signal explicit interactions, keeping the final sequence length at approximately \(10^2\) and still discarding fine-grained information. Another direction explores generative architectures such as HSTU \cite{hstu}, though these often require extremely large models and substantial compute budgets \cite{transact}. LONGER \cite{longer} reduces computation through fixed window token merging followed by a lightweight Transformer over adjacent tokens. While this lowers FLOPs without hurting baseline accuracy, fixed-rate merging cannot adapt to the uneven information density of heterogeneous user behaviors. Recently, PerRec \cite{personexprt} introduced learnable tokens for sequence compression; however, it still relied on a fixed set of learnable tokens. In contrast, DP-Rec processes raw long-range sequences directly with Transformer architecture, reducing reliance on aggressive upfront compression and fixed set of assumed windows and/or learnable tokens.

\subsection{Sequence Compressions in Other Domains}
Recent research has increasingly focused on relaxing fixed-length constraints through adaptive or multi-scale segmentation strategies. Frameworks such as MSPatch \cite{mspatch} and HDMixer \cite{hdmixer} leverage multi-resolution patching and extendable interpolation, respectively, while MOIRAI \cite{morai} determines patch granularity within the frequency domain. Other approaches address sequence non-stationarity and irregular intervals; for example, AdaPatch \cite{adapatch} introduces patch-level normalization, and APN \cite{apn} employs time-aware aggregation driven by data-density regularization. More granular architectures like SRSNet \cite{srnet} and TimeMosaic \cite{timemosaic} utilize selective reassembly or motif-based adaptation to enhance representation efficiency and decoding specialization. EntroPE\cite{entrope}, recently explored application of BLT to timeseries demonstrating strong efficiency and performance gains.

\section{Temporal Rotary Positional Embeddings (Time-RoPE)}                                      
\label{sec:timerope}                   
To encode interaction rhythms, we adapt Rotary Positional Embeddings \cite{rope} to absolute timestamps $T_t$. For a latent vector $\mathbf{x} \in \mathbb{R}^d$, the transformation $\mathcal{R}(\mathbf{x}, T_t)$ rotates $d/2$ dimension pairs in the query-key space:
\begin{equation}
\mathcal{R}(\mathbf{x}, T_t) = \mathbf{x} \odot \cos(\mathbf{\Theta} T_t) + \mathbf{\tilde{x}} \odot \sin(\mathbf{\Theta} T_t)
\end{equation}
where $\mathbf{\Theta} = \{\theta_i = 10000^{-2i/d}\}_{i=1}^{d/2}$ are the inverse frequencies. This makes the attention score between two interactions at times $T_t$ and $T_{t'}$ a function of their time lag $(T_t - T_{t'})$, giving both the local and latent Transformers a continuous, density-aware sense of behavioral timing.

\section{Lemma - Patch-count calibration under distinct scores}
  The following lemma formalizes the calibration property of the proposed thresholding rule under an idealized no-tie condition. Specifically, it shows that when all valid surprise scores are distinct, selecting the threshold as the $k$-th largest valid score yields exactly the intended number of detected boundaries, and hence matches the target average patch budget up to rounding. The result is purely combinatorial and serves to justify the thresholding mechanism, rather than the semantic quality of the resulting boundaries.

  \begin{lemma}[Patch-count calibration under distinct scores]
  \label{lem:threshold_count}
  Let $\phi \in \mathbb{R}^{B \times N}$ denote the surprise scores, $\boldsymbol{\mu} \in \{0,1\}^{B \times N}$ the padding mask, and $\Phi=\{\phi_{u,t}\mid \boldsymbol{\mu}_{u,t}=1\}$ the multiset of valid
  scores. Let $\hat{M}$ be the target average number of patches, $\bar{N} = |\Phi|/B$ the average sequence length, and $M_{\mathrm{eff}} = \min(\hat{M}, \bar{N})$. Define the boundary budget as:
  \[
  k = \mathrm{round}((M_{\mathrm{eff}} - 1)B).
  \]
  Assume all values in $\Phi$ are distinct and $0 \le k \le |\Phi|$. Let $\tau$ be the $k$-th largest element of $\Phi$, and $\boldsymbol{\Gamma}_{u,t} = \mathbb{I}[\phi_{u,t} \ge \tau] \cdot
  \boldsymbol{\mu}_{u,t}$. Then the total number of detected boundaries is exactly $k$, and the total number of patches is:
  \[
  \text{Total Patches} = k + B = \mathrm{round}(M_{\mathrm{eff}} B).
  \]
  Consequently, the average number of patches per sequence equals $M_{\mathrm{eff}}$.
  \end{lemma}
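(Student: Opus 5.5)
The plan is a direct counting argument in three steps, plus a separate treatment of the degenerate case $k=0$.

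\emph{Counting boundaries.} Consider first $1 \le k \le |\Phi|$. Because the elements of $\Phi$ are pairwise distinct, sorting gives a strictly decreasing sequence $\Phi^\downarrow_1 > \Phi^\downarrow_2 > \cdots > \Phi^\downarrow_{|\Phi|}$. With $\tau = \Phi^\downarrow_k$, a valid score satisfies $\phi \ge \tau$ exactly when it is one of $\Phi^\downarrow_1,\dots,\Phi^\downarrow_k$, so precisely $k$ valid entries clear the threshold. The factor $\boldsymbol{\mu}_{u,t}$ in $\boldsymbol{\Gamma}_{u,t}$ sets every padded position to zero. Hence
\[
\sum_{u,t}\boldsymbol{\Gamma}_{u,t} = |\{(u,t): \boldsymbol{\mu}_{u,t}=1,\ \phi_{u,t}\ge\tau\}| = k.
\]
Distinctness is used only here: ties at $\tau$ would let the count exceed $k$, which is the deviation the main text mentions. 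When $k=0$, Algorithm~\ref{alg:threshold_optimization} sets $\tau=+\infty$ and $\boldsymbol{\Gamma}\equiv 0$, so the count is again $k$.

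\emph{From boundaries to patches.} Next I would fix the convention, implicit in the statement, that each sequence $u$ with $b_u$ boundaries is cut into $b_u+1$ patches. Under this convention a boundary marks the start of a new patch, and the first patch always exists. Summing over the batch gives
\[
\text{Total Patches} = \sum_u (b_u+1) = k + B.
\]
This is the step I expect to be the main obstacle. It is not hard, but it is the one point where the algorithm and the stated count can disagree. If a boundary were allowed to fall at the first valid position of a sequence, or if a sequence were empty, the formula $b_u+1$ could overcount. I would handle this either by stating the convention as part of the patch definition, or by noting that boundary flags sit at positions $t\ge 2$ in the next-item surprise formulation.

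\emph{Arithmetic and averaging.} Since $B$ is an integer, $\mathrm{round}(x)+B = \mathrm{round}(x+B)$, so
\[
k + B = \mathrm{round}\big((M_{\mathrm{eff}}-1)B + B\big) = \mathrm{round}(M_{\mathrm{eff}}B).
\]
Dividing by $B$ gives an average of $M_{\mathrm{eff}}$ patches per sequence, exact when $M_{\mathrm{eff}}B$ is an integer and within $1/(2B)$ otherwise. This is what ``up to rounding'' in the preamble means, and I would state that qualification explicitly.
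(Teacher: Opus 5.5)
Your proposal is correct and follows the paper's proof essentially step for step: order statistics give exactly $k$ boundaries, each sequence has $\gamma_u+1$ patches, the integer $B$ shifts inside the rounding, and you divide by $B$. Your explicit treatment of $k=0$ and of the patch-count convention, and your observation that in general the average is only within $1/(2B)$ of $M_{\mathrm{eff}}$, are refinements the paper leaves implicit; moreover, when $\hat{M}$ is an integer (as in all the paper's sweeps), $M_{\mathrm{eff}}B$ is always an integer, being either $\hat{M}B$ or $\bar{N}B=|\Phi|$, so the rounding is vacuous and the average is exactly $M_{\mathrm{eff}}$, as the statement claims.
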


  \begin{proof}
  Since all scores in $\Phi$ are distinct, the $k$-th largest value $\tau$ is unique. By definition of the $k$-th order statistic, there are exactly $k$ elements in $\Phi$ with $\phi_{u,t} \ge \tau$. Thus the
  total boundary count is:
  \[
  \sum_{u=1}^{B} \sum_{t=1}^{N} \boldsymbol{\Gamma}_{u,t} = k.
  \]
  Each sequence $u \in \{1, \dots, B\}$ is partitioned into $M_u = \gamma_u + 1$ patches, where $\gamma_u = \sum_t \boldsymbol{\Gamma}_{u,t}$ is its boundary count. The total number of patches across the batch is:
  \[
  \sum_{u=1}^{B} M_u = \sum_{u=1}^{B} (\gamma_u + 1) = \left( \sum_{u=1}^{B} \gamma_u \right) + B = k + B.
  \]
  Substituting $k = \mathrm{round}((M_{\mathrm{eff}} - 1)B) = \mathrm{round}(M_{\mathrm{eff}} B - B)$ and using that $B$ is an integer:
  \[
  k + B = \mathrm{round}(M_{\mathrm{eff}} B) - B + B = \mathrm{round}(M_{\mathrm{eff}} B).
  \]
  Dividing by $B$ yields the average patch count $M_{\mathrm{eff}}$, which equals the target $\hat{M}$ whenever the average sequence length $\bar{N}$ is sufficient.
  \end{proof}

\section{Robustness to Item Popularity Tiers}
\label{sec:longtail}
\begin{figure*}[t]
     \centering
     \begin{subfigure}[b]{0.32\textwidth}
          \centering
          \includegraphics[width=\textwidth]{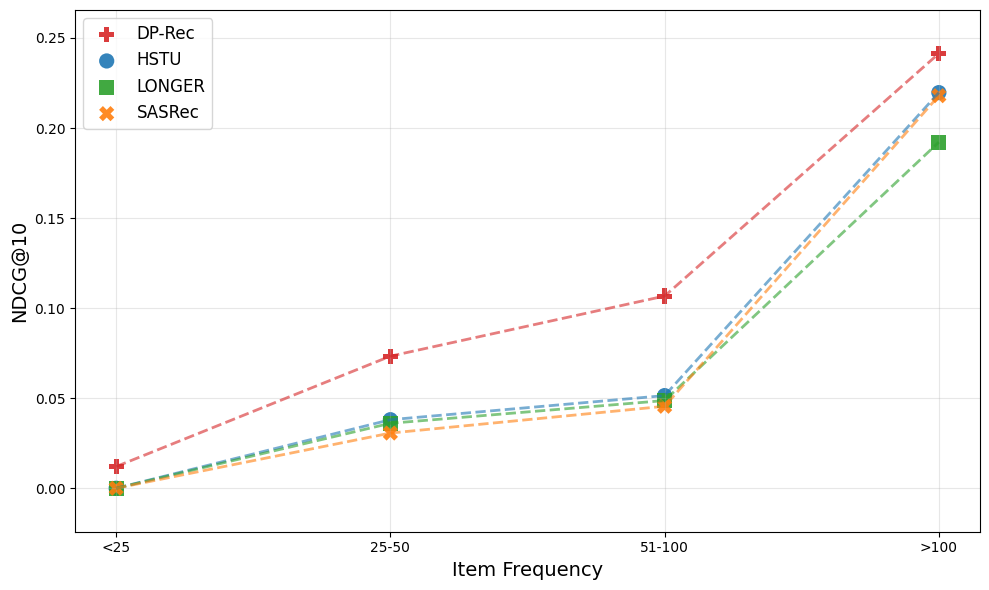}
          \caption{ML-1M}
     \end{subfigure}%
    \hfill
     \begin{subfigure}[b]{0.32\textwidth}
          \centering
          \includegraphics[width=\textwidth]{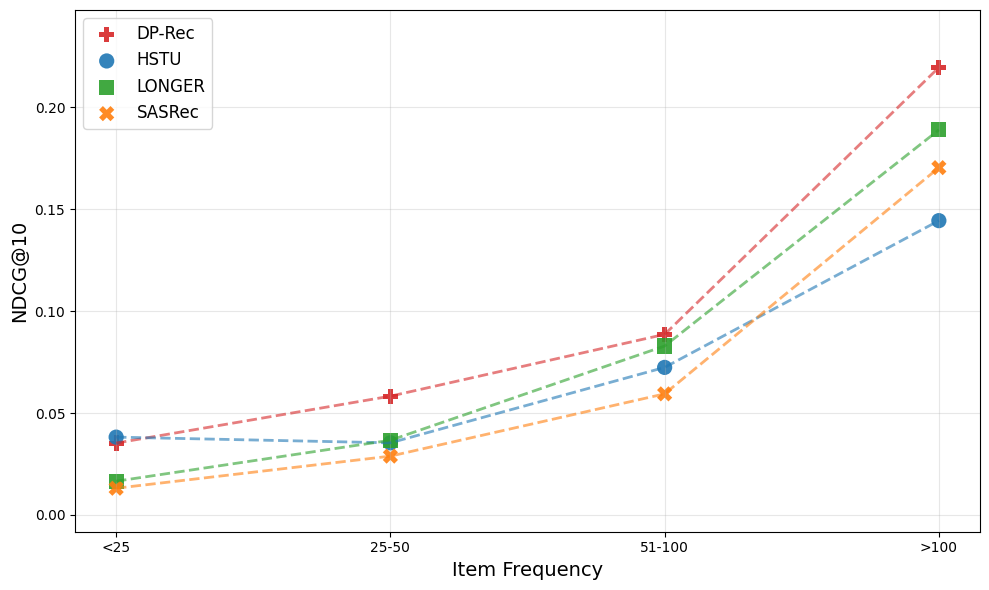}
          \caption{ML-10M-LONG}
     \end{subfigure}%
     \hfill 
     \begin{subfigure}[b]{0.32\textwidth}
          \centering
          \includegraphics[width=\textwidth]{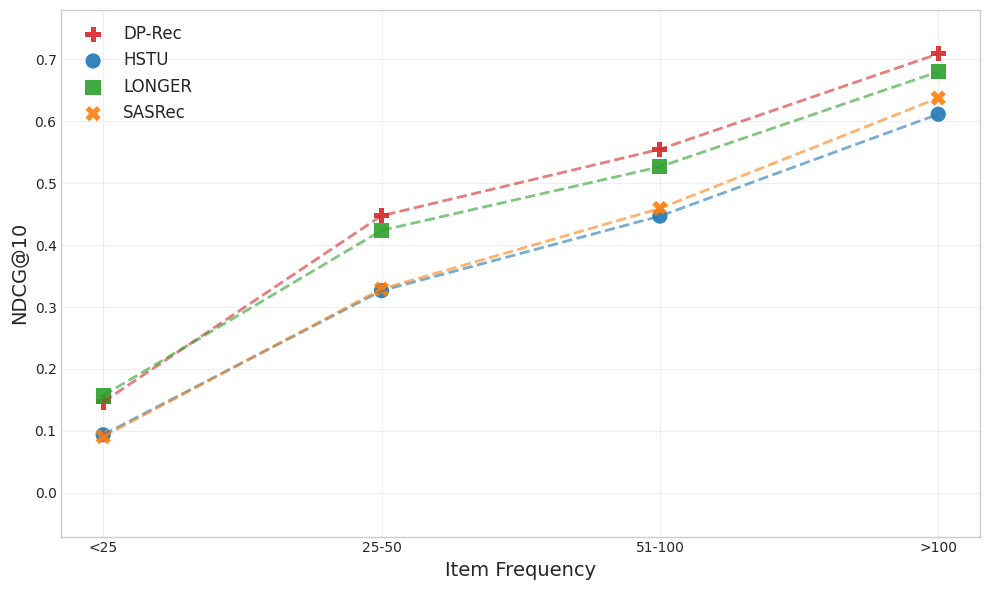}
          \caption{KuaiRand}
     \end{subfigure}
     \caption{Recommendation Quality (NDCG@10) by Item Popularity across three benchmark datasets.}
     \label{fig:item_popularity}
\end{figure*}

The analysis of recommendation utility across item frequency tiers reveals that while all models benefit from increased interaction density, their ability to handle "tail" versus "head" items varies significantly. Across all three datasets, we observe a monotonic increase in NDCG@10 as item frequency rises, confirming that higher-resolution signals for popular items naturally facilitate more accurate predictions. \textbf{DP-Rec} consistently establishes the performance ceiling across the entire popularity spectrum. Specifically, in low-frequency bins ($<25$ interactions), \textbf{DP-Rec} exhibits greater resilience than standard baselines, suggesting that its adaptive boundary detection effectively extracts meaningful patterns even from sparse interaction data where fixed-horizon models like \textit{SASRec} and \textit{HSTU} struggle to isolate a clear signal.

In the high-frequency regime ($>100$ interactions), the advantage of compression-based architectures becomes even more pronounced. In \textit{ML-10M-Long} and \textit{KuaiRand}, both \textbf{DP-Rec} and \textit{LONGER} significantly outperform fixed-horizon baselines. This indicates that for popular items, relevant behavioral context is likely distributed across a much longer temporal window; while fixed-horizon models are limited by rigid truncation, the adaptive patching of \textbf{DP-Rec} allows it to aggregate information from the entire user history without signal dilution. Notably, in \textit{KuaiRand}, \textit{LONGER} performs exceptionally well, closely trailing \textbf{DP-Rec}, whereas in \textit{ML-1M}, \textbf{DP-Rec} maintains a wider margin over all competitors in the mid-to-head tiers. Collectively, these results demonstrate that \textbf{DP-Rec} provides a robust architectural advantage that scales effectively from sparse tail items to dense head items, outperforming both traditional attention-based models and uniform compression strategies.

\section{Performance across Temporal Lag Bins}
\begin{figure*}[t]
     \centering
     \begin{subfigure}[b]{0.63\columnwidth}
          \centering
          \includegraphics[width=\textwidth]{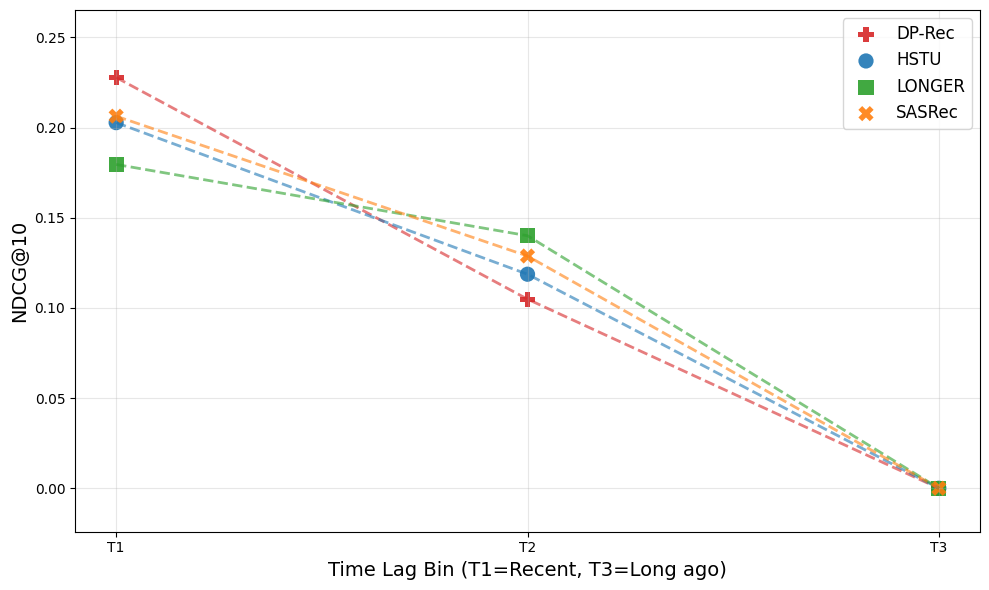}
          \caption{ML-1M}
    \end{subfigure}
    \begin{subfigure}[b]{0.63\columnwidth}
          \centering
          \includegraphics[width=\textwidth]{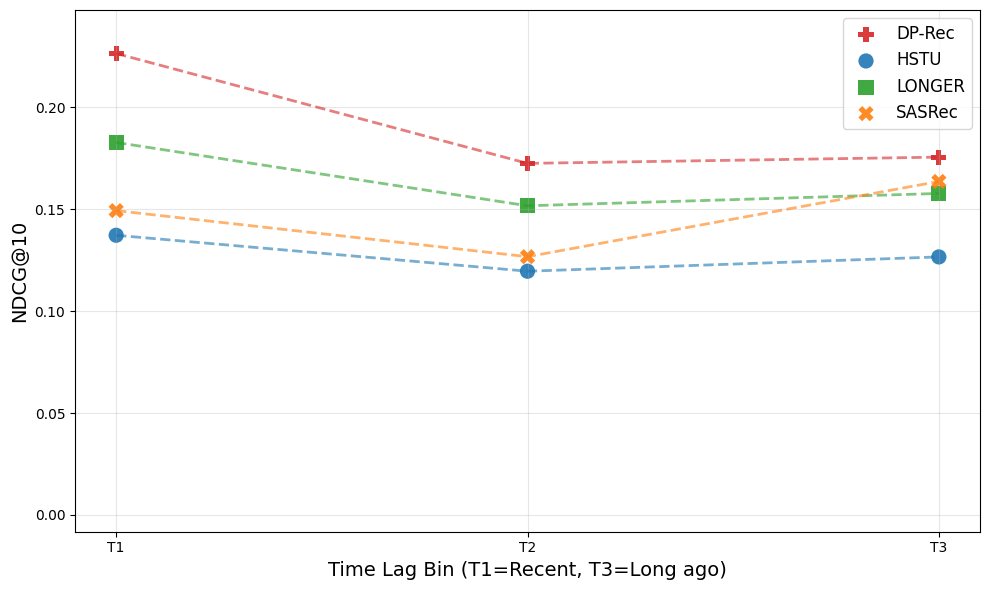}
          \caption{ML-10M-LONG}
    \end{subfigure}
    \begin{subfigure}[b]{0.63\columnwidth}
          \centering
          \includegraphics[width=\textwidth]{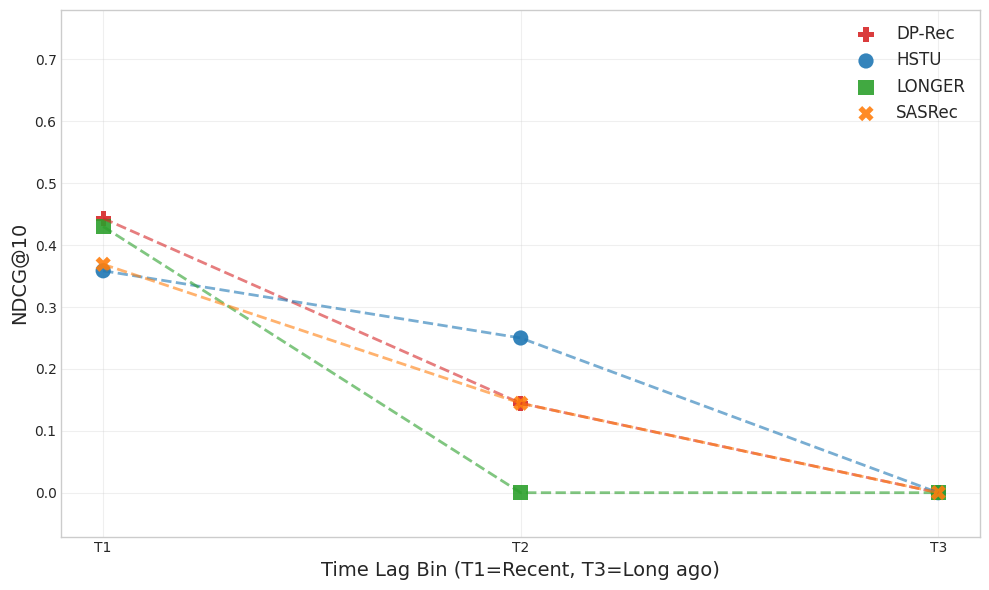}
          \caption{KuaiRand}
     \end{subfigure}
     \caption{Recommendation Quality(NDCG@10) by Time Lag between the last item and the item in the test to be ranked.}
     \label{fig:temporal_lag}
\end{figure*}
Temporal analysis across the experimental suite clarifies how the distribution of target signals modulates model performance over time. In \textit{ML-1M} and \textit{KuaiRand}, we observe a significant recency bias, where predictive utility for all models effectively collapses by the most distal bin (T3). In these scenarios, \textbf{DP-Rec} demonstrates superior peak performance in the immediate temporal bin (T1), indicating its efficacy in capturing high-entropy signals within recent interactions. However, model behavior diverges in the transition to T2: while \textit{LONGER} maintains a slight edge in \textit{ML-1M}, it suffers a catastrophic collapse to zero in \textit{KuaiRand}, whereas \textit{HSTU} exhibits higher middle-range robustness. This suggests that while uniform compression or fixed horizons can occasionally capture specific temporal windows, they are inherently prone to failure when the underlying signal density shifts.
The results for \textit{ML-10M-Long} provide the most compelling evidence for \textbf{DP-Rec}'s architectural advantage in long-range scenarios. Unlike the other datasets, \textit{ML-10M-Long} exhibits remarkable signal persistence, with significant utility maintained even in the T3 bin. Here, \textbf{DP-Rec} consistently defines the performance ceiling across the entire temporal spectrum, maintaining a nearly flat and dominant trajectory compared to baselines. Its ability to preserve distal signals—while models like \textit{SASRec} and \textit{LONGER} struggle to close the gap—confirms that the entropy-driven boundary detector successfully identifies and retains informative behavioral shifts regardless of their sequence position. Collectively, these results highlight that while baseline models are often constrained by a dataset's inherent temporal decay, \textbf{DP-Rec}’s adaptive patching provides a robust efficiency advantage across both high-recency and signal-persistent environments.

Long-sequence recommendation is particularly relevant in sparse and long-tail settings, where a user’s older interactions may still contain useful preference signal. We therefore analyze performance stratified by item popularity to assess whether DP-Rec is especially beneficial for long-tail recommendation. Because adaptive patching preserves informative intent transitions rather than truncating the sequence uniformly, DP-Rec demonstrates gains across all the item popularity cohorts(refer Figure \ref{fig:item_popularity}).

\section{Experimental Configuration}                                                                                                                                                                               

Table~\ref{tab:appendix_hyperparameters_transposed} provides a comprehensive overview of the experimental configurations and hyperparameter search spaces across all models.                                       
                                                                                     
\begin{table*}[t]                      
\footnotesize
\centering
\caption{Summary of experimental configurations and hyperparameter search spaces. For direct efficiency comparisons, \textbf{Iso-capacity} variants utilize a $1/1/1$ block structure ($d=64$) to parameter-match
SASRec ($d=64, L=3$).}
\label{tab:appendix_hyperparameters_transposed}
\small
\setlength{\tabcolsep}{6pt}
\begin{tabular}{l ccc cc}
\toprule
\textbf{Configuration} & \multicolumn{3}{c}{\textbf{Fixed-Horizon Models}} & \multicolumn{2}{c}{\textbf{Compression-Based Models}} \\
\cmidrule(lr){2-4} \cmidrule(lr){5-6}
& SASRec \cite{sasrectorch} & HSTU \cite{rectools} & GRU4Rec \cite{rectools} & LONGER & \textbf{DP-Rec} \\
\midrule
Blocks ($L$) & 3 & 3 & $\{1, 3\}$ & $L_{\mathrm{inner}}, L_{\mathrm{outer}} \in \{1, 3\}$ & $L_{\mathrm{enc}}=1, L_{\mathrm{dec}}=1, L_{\mathrm{lat}} \in \{1, 3\}$ \\
Hidden size ($d$) & $\{32, 64\}$ & 64 & 64 & $\{16, 32, 64\}$ & $\{16, 32, 64\}$ \\
Seq. length ($N$) & $\{25, \dots, 500\}$ & Matched & $\{50, 100\}$ & $\{100, \dots, 500\}$ & $\{100, \dots, 500\}$ \\
Patch Budget ($\hat{M}$) & -- & -- & -- & $\{2, \dots, 250\}$ & $\{2, \dots, 250\}$ \\
Pool Initialization & True & True & True & True & True \\
Attention Heads & 1 & 1 & -- & 1 & 1 \\
\midrule
\rowcolor{gray!10} \multicolumn{6}{l}{\textit{Architecture Specifics}} \\
Iso-capacity $L$ & -- & -- & -- & $L_{\mathrm{inner}} = L_{\mathrm{outer}} = 1$ & $L_{\mathrm{enc}} = L_{\mathrm{dec}} = L_{\mathrm{lat}} = 1$ \\
Bridge Mechanism & -- & -- & -- & Cross-Attn & Cross-Attn / Gather-Add \\
Boundary Detector & -- & -- & -- & -- & $d_{\mathrm{bbd}}=8, L_{\mathrm{bbd}}=1, w_e=32$ \\
\midrule
\rowcolor{gray!10} \multicolumn{6}{l}{\textit{Training Specifics (Common to all models)}} \\
Optimizer & \multicolumn{5}{c}{Adam (Learning Rate = $10^{-3}$, Weight Decay = 0)} \\
Batch Size & \multicolumn{5}{c}{128} \\
Max Epochs & \multicolumn{5}{c}{500} \\
Validation Interval & \multicolumn{5}{c}{10 epochs} \\
Early Stopping & \multicolumn{5}{c}{No} \\
Gradient Clipping & \multicolumn{5}{c}{No} \\
\bottomrule
\end{tabular}
\end{table*}

\section{Detailed FLOPs Accounting}
Inference efficiency is measured by FLOPs per forward pass, following
\cite{hoffmann2022training} but excluding non-arithmetic costs (normalization,
residuals, and lookups). For a Transformer with sequence length $n$, average
attention context $m$, hidden dimension $h$, feed-forward expansion
$d_{\mathrm{ff}}$, and $\ell$ layers:

\begin{equation}
\mathrm{FLOPs}(n,\,m)
= n\ell \!\left(
  4h^{2}d_{\mathrm{ff}}
+ 8h^{2}
+ 2h(m+1)
\right).
\label{eq:base_flops}
\end{equation}

\noindent
The three terms correspond to the FFN ($4h^{2}d_{\mathrm{ff}}$), the $Q/K/V/O$ projections ($8h^{2}$), and attention score computation plus value aggregation ($2h(m+1)$). For full causal self-attention, $m = n$ (each of the $n$ positions attends on average to $(n+1)/2$ tokens, giving $4h \cdot \frac{n+1}{2} \equiv 2h(m+1)$ with $m=n$). For sliding-window
attention with window $w$, the bounded context gives $m = w$. We set $d_{\mathrm{ff}}=4$ throughout.

\paragraph{SASRec.}
Full causal attention over $n$ tokens ($m = n$):
\begin{equation}
\mathrm{FLOPs}_{\mathrm{SAS}}
= n\ell \!\left( 4h^{2}d_{\mathrm{ff}} + 8h^{2} + 2h(n+1) \right).
\label{eq:sas_flops}
\end{equation}

\paragraph{HSTU.}
Same causal backbone with a $1.1\times$ overhead for gating:
\begin{equation}
\mathrm{FLOPs}_{\mathrm{HSTU}} = 1.1 \times \mathrm{FLOPs}_{\mathrm{SAS}}.
\label{eq:hstu_flops}
\end{equation}

\paragraph{LONGER.}
Let $k$ be the group size, $G = \lceil n/k \rceil$ the number of groups, $h_{\ell}, \ell_{\ell}$ the local (inner-transformer) hidden size and layers, and $h_g, \ell_g$ the global hidden size and layers. Inner-group encoding uses full causal attention over $k$ tokens ($m=k$); global reasoning uses full causal attention over $G$ groups ($m=G$):
\begin{equation}
\mathrm{FLOPs}_{\mathrm{LNG}}
=
G \cdot f_{\ell_{\ell},h_{\ell}}(k,\,k)
\;+\; 2nGh_g
\;+\; f_{\ell_g,h_g}(G,\,G),
\label{eq:longer_flops}
\end{equation}
where $f_{\ell,h}(n,m) = \ell(4h^{2}d_{\mathrm{ff}} + 8h^{2} + 2h(m+1))$ is the per-token cost from Eq.~\ref{eq:base_flops}, and $2nGh_g$ is the token-to-group cross-attention cost.

\paragraph{DP-Rec.}
Let $n_p$ be the average patch size, $M = \lceil n/n_p \rceil$ the number of latent patches,
$d_{\mathrm{bbd}}, L_{\mathrm{bbd}}, w_e$ the boundary detector hidden size, layers, and window,
$d, L_{\mathrm{lat}}$ the latent transformer hidden size and layers,
$d_{\mathrm{loc}}, L_{\mathrm{loc}}, w$ the local encoder/decoder hidden size, layer count, and sliding window.
The total per-sequence cost decomposes as:
\begin{equation}
\mathrm{FLOPs}_{\mathrm{DP}}
= n\!\left(
  f_{\mathrm{bbd}}
+ \frac{f_{\mathrm{lat}}}{n_p}
+ f_{\mathrm{enc}}
+ f_{\mathrm{dec}}
+ f_{\mathrm{brg}}
\right),
\label{eq:dp_flops}
\end{equation}
where each per-token term is an instance of Eq.~\ref{eq:base_flops}:
\begin{align}
f_{\mathrm{bbd}}
&= L_{\mathrm{bbd}}\!\left(
     4d_{\mathrm{bbd}}^{2}d_{\mathrm{ff}} + 8d_{\mathrm{bbd}}^{2} + 2d_{\mathrm{bbd}}(w_e+1)
   \right),
\label{eq:f_bbd}\\[3pt]
f_{\mathrm{lat}}
&= L_{\mathrm{lat}}\!\left(
     4d^{2}d_{\mathrm{ff}} + 8d^{2} + 2d(M+1)
   \right),
\label{eq:f_lat}\\[3pt]
f_{\mathrm{enc}}
&= L_{\mathrm{loc}}\!\left(
     4d_{\mathrm{loc}}^{2}d_{\mathrm{ff}} + 8d_{\mathrm{loc}}^{2} + 2d_{\mathrm{loc}}(w+1)
   \right),
\label{eq:f_enc}\\[3pt]
f_{\mathrm{dec}}
&= L_{\mathrm{loc}}\!\left(
     4d_{\mathrm{loc}}^{2}d_{\mathrm{ff}} + 8d_{\mathrm{loc}}^{2} + 2d_{\mathrm{loc}}(w+1)
   \right).
\label{eq:f_dec}
\end{align}
The global cost $f_{\mathrm{lat}}$ is amortised over $n_p$ tokens per patch (the $1/n_p$ factor
in Eq.~\ref{eq:dp_flops}). The bridge term $f_{\mathrm{brg}}=0$ under the default gather-add
bridge. Although the behavioral boundary detector is frozen during training, its cost is
included in the total inference budget.

\begin{table}[t]
\centering
\caption{%
Concrete FLOPs at $n=200$, $d_{\mathrm{ff}}=4$.
SASRec/HSTU: $d=64$, $\ell=3$.
LONGER: $d=d_{\mathrm{loc}}=64$, $L_{\mathrm{lat}}=L_{\mathrm{loc}}=1$, $k=4$, $G=50$.
DP-Rec: $d=d_{\mathrm{loc}}=64$, $L_{\mathrm{lat}}=L_{\mathrm{loc}}=1$, $w=32$,
$d_{\mathrm{bbd}}=8$, $L_{\mathrm{bbd}}=1$, $w_e=32$, $M=10$.
}
\label{tab:flops}
\begin{tabular}{lccc}
\toprule
\textbf{Model} & \textbf{FLOPs} & \textbf{vs.\ SASRec} & \textbf{Complexity} \\
\midrule
SASRec  & $74{,}419{,}200$ & $1.00\times$ & $O(n^2 h \ell)$ \\
HSTU    & $81{,}861{,}120$ & $1.10\times$ & $O(n^2 h \ell)$ \\
LONGER  & $26{,}310{,}400$ & $0.35\times$ & $O(nk d_{\mathrm{loc}} L_{\mathrm{loc}} + G^2 d L_{\mathrm{lat}})$ \\
DP-Rec  & $42{,}421{,}120$ & $0.57\times$ & $O(nw\, d_{\mathrm{loc}} \cdot 2L_{\mathrm{loc}} + M^2 d\, L_{\mathrm{lat}})$ \\
\bottomrule
\end{tabular}
\end{table}

\end{document}